\documentclass[10pt]{article}

\usepackage[margin=1in]{geometry}
\usepackage{microtype}
\usepackage{setspace}
\setstretch{1.05}

\usepackage{amsmath, amssymb, amsthm}
\numberwithin{equation}{section}
\usepackage{graphicx}
\usepackage{booktabs}
\usepackage{siunitx}
\usepackage{url}
\usepackage{booktabs} 
\usepackage{amsfonts}  
\usepackage{nicefrac}      
\usepackage{microtype}      
\usepackage{xcolor}         

\usepackage{hyperref}       
\definecolor{midnight}  {rgb}{0,0,.9}
\definecolor{forest}  {rgb}{0,.6,0}
\hypersetup{
  colorlinks,
  citecolor=forest,
  linkcolor=midnight,
  urlcolor=midnight
  }

\usepackage{amsmath}
\usepackage{graphicx}
\usepackage{amssymb}
\usepackage{mathtools}
\usepackage{amsthm}
\newtheorem{proposition}{Proposition}[section] 
\usepackage{multirow}
\usepackage{algorithm}      
\usepackage[noend]{algpseudocode}

\usepackage{bm}        
\usepackage{makecell}
\usepackage{siunitx}  
\usepackage{wrapfig}

\usepackage{tikz}
\usetikzlibrary{backgrounds, arrows.meta, positioning, decorations.pathreplacing, calc}
\usepackage{xcolor}

\definecolor{first}{RGB}{65,105,225}  
\definecolor{second}{RGB}{34,139,34}  
\definecolor{third}{RGB}{204,85,0}   

\usepackage[numbers,sort&compress]{natbib}
\bibliographystyle{unsrtnat} 

\usepackage{authblk}
\setlength{\affilsep}{0.3em}      

\usepackage[compact]{titlesec}

\title{Sketch-Augmented Features Improve Learning Long-Range Dependencies in Graph Neural Networks}

\author[1]{Ryien Hosseini}
\author[2]{Filippo Simini}
\author[2]{Venkatram Vishwanath}
\author[1,3,4]{\\Rebecca Willett}
\author[1]{Henry Hoffmann}

\affil[1]{\textit{Department of Computer Science, University of Chicago}}
\affil[2]{\textit{Leadership Computing Facility, Argonne National Laboratory}}
\affil[3]{\textit{Department of Statistics, University of Chicago}}
\affil[4]{\textit{NSF-Simons National Institute for Theory and Mathematics in Biology}}

\date{} 

\begin{document}
\maketitle
\begingroup
\renewcommand\thefootnote{*}
\footnotetext{This manuscript will be presented at NeurIPS 2025. Correspondence to: \href{mailto:ryien@uchicago.edu}{ryien@uchicago.edu}.}
\endgroup

\vspace{-2em}
\begin{abstract}
    Graph Neural Networks learn on graph-structured data by iteratively aggregating local neighborhood information. While this local message passing paradigm imparts a powerful inductive bias and exploits graph sparsity, it also yields three key challenges: (i) oversquashing of long-range information, (ii) oversmoothing of node representations, and (iii) limited expressive power. In this work we inject randomized global embeddings of node features, which we term \textit{Sketched Random Features}, into standard GNNs, enabling them to efficiently capture long-range dependencies. The embeddings are unique, distance-sensitive, and topology-agnostic---properties which we analytically and empirically show alleviate the aforementioned limitations when injected into GNNs. Experimental results on real-world graph learning tasks confirm that this strategy consistently improves performance over baseline GNNs, offering both a standalone solution and a complementary enhancement to existing techniques such as graph positional encodings. Our source code is available at \href{https://github.com/ryienh/sketched-random-features}{https://github.com/ryienh/sketched-random-features}.
\end{abstract}

\section{Introduction}
\label{sec:introduction}

Graph Neural Networks (GNNs) are a widely adopted approach for representation learning on graph-structured data. Standard GNN architectures employ a message-passing framework \cite{mpgnn}, wherein each node’s features are iteratively propagated and aggregated along edges. Thus, GNNs leverage a strong topology-induced bias: node features are iteratively refined by their neighbors, enabling local interactions to shape final node or graph embeddings. Moreover, these localized computations naturally exploit graph sparsity, facilitating efficient training. Despite this success, three persistent challenges remain:
\begin{itemize}
    \item \textbf{Oversquashing:} Signals from distant nodes are compressed into a fixed size vector when traversing multiple hops, hindering a GNN’s ability to capture long-range interactions \cite{bottleneck}.
    
    \item \textbf{Oversmoothing:} As network depth grows, node representations converge exponentially to nearly identical values, eliminating meaningful distinctions \cite{oversmoothing_1, oversmoothing_2}.

    \item \textbf{Limited Expressive Power:} Standard GNNs have expressive power no greater than the 1-dimensional Weisfeiler--Lehman heuristic \cite{wl}, leading them to fail at distinguishing many non-isomorphic graphs or structurally distinct subgraphs \cite{gin}.

\end{itemize}

To address these limitations, two primary lines of research have emerged. The first adapts transformers \cite{transformer} for graph-structured data, either by combining message passing with dense attention layers
\cite{hybrid_1, hybrid_2, hybrid_3, hybrid_4} or by replacing message passing altogether \cite{pure_gt_1, pure_gt_2}. This approach allows the model to attend to arbitrary node pairs regardless of geodesic distance, thus circumventing many limitations of message-passing GNNs. However, these methods typically incur $O(N^2)$ memory and computation cost in the number of nodes $N$ and rely on carefully designed node-level signals (called \textit{positional} or \textit{structural encodings}) to inject graph structure back into the model \cite{pure_gt_1}. These encodings can be challenging to design, and their theoretical properties remain only partially understood \cite{gt_survey}.

A second line of work augments traditional GNNs with encodings---often inspired by 
transformers---to improve performance by enhancing expressiveness \cite{wl_survey}. These include purely random node features that break symmetry and guarantee universal separation of non-isomorphic graphs \cite{rnf_abboud, rnf_sato}. However, such unstructured noise can slow convergence or degrade learning in practice \cite{rnf_is_bad_empirically}. In contrast, \textit{structural encodings} (e.g., spectral embeddings \cite{original_spectral_embd_mpgnn}) are topologically distance-sensitive but rely on deterministic, graph-based representations that are difficult to make both \textit{unique} and \textit{equivariant} under node permutation. This is because, unlike Euclidean spaces, graphs lack a canonical coordinate system that defines cardinal direction, creating ambiguities that the GNN must handle. Notably, both these approaches primarily focus on topology-derived encodings, and do not consider informative node features typically present in real-world graphs.

This observation motivates our approach: \emph{instead of abandoning message passing or augmenting it solely with structural information, we propose leveraging node features themselves to address GNN limitations.}
To this end, we introduce \textbf{Sketched Random Features (SRF)}, presented in Algorithm~\ref{algo:main}, a simple yet powerful method for injecting global, feature-distance-sensitive signals into GNNs. SRF augments each node’s features with
\textit{sketches}, or randomized low-dimensional projections, of all node features in a kernel space, preserving feature similarity in expectation and breaking symmetry through randomization.
This paper describes how sketched random features can augment feature representations in every layer of a message passing GNN to yield accurate predictions in a variety of settings. 
By the Johnson–Lindenstrauss lemma \cite{jl_lemma,jl_dim_reduction}, such projections preserve important distance relationships in feature space even in significantly reduced dimensions. We leverage these properties to construct node-feature representations that we show overcome the challenges above. 

Consider a graph's node feature matrix
$X \in \mathbb{R}^{N \times F}$, where the $i$-th column of $X^T$,
$\mathbf{x}_i \in \mathbb{R}^F$
denotes the features of node $i$, and a positive-definite similarity function (kernel) 
$\kappa: \mathbb{R}^F \times \mathbb{R}^F \to \mathbb{R}$ defined between these features. 
Let $\mathcal{E}: \mathbb{R}^{N \times F} \rightarrow \mathbb{R}^{N \times D}$ denote a mapping that embeds $X$ into a random feature space 
via feature map $\varphi(\cdot)$. 

\vspace{-4pt}
\begin{wrapfigure}{}{.51\linewidth}
\vspace{-0.5cm}
{\begin{minipage}{\linewidth}
\begin{algorithm}[H]
\caption{Sketched Feature GNN} 
\label{algo:main}
\begin{algorithmic}[1]
\State {\bf Input} 
$\mathcal{G} = (V, E)$ ;
$X \in \mathbb{R}^{N \times F}$; $k$;$L$

\State $\Phi = {\cal E}(X)$
\State $Z = {\cal S}^{(k)}(\Phi)$
\State Initialize: $\mathbf{h}_i^{(0)} = \mathbf{x}_i$ for all $i \in V$
\For{layer $\ell = 0$ to $L-1$}
    \For{each node $i \in V$}
        \State ${\tilde{\mathbf{h}}_i^{(\ell)}} = [\mathbf{h}_i^{(\ell)} | \mathbf{z}_i]$
        \State $\mathbf{h}_i^{(\ell+1)} = f\left(\tilde{\mathbf{h}}_i^{(\ell)}, \{\tilde{\mathbf{h}}_j^{(\ell)} : j \in \mathcal{N}(i)\}\right)$
    \EndFor
\EndFor
\State \Return $\mathbf{h}_i^{(L)}$ 
\end{algorithmic}
\end{algorithm}
\end{minipage}
\par
}
\end{wrapfigure} 

While \emph{any} 
random kernel feature
 $\varphi(\mathbf{x}_i) \in \mathbb{R}^D$ (e.g., via random Fourier features \cite{random_fourier_features}) already provides an unbiased approximation of similarity between node features, i.e.,
$\mathbb{E}\bigl[\varphi(\mathbf{x}_i)^\top \,\varphi(\mathbf{x}_j)\bigr] 
\;=\;
\kappa(\mathbf{x}_i,\mathbf{x}_j)$,
we go further by applying a \emph{cross-node random projection} $\mathcal{S}^{(k)} \in \mathbb{R}^{N \times N}$. 
Let $\Phi \in \mathbb{R}^{N \times D}$ (with $(D \ll N$) be the matrix of random kernel features, where the $i$-th row of $\Phi$ is $\varphi(\mathbf{x}_i)$. Multiplying $\mathcal{S}^{(k)}$ and $\Phi$ yields the
\textbf{kernel sketch}
$Z \;=\; \mathcal{S}^{(k)}\,\Phi$,
where the $i$-th column of $Z^T$, denoted $\mathbf{z}_i$, is a linear combination of \emph{all} node features in the kernel space. 
Crucially, these sketched embeddings $\{\mathbf{z}_i\}$ still preserve kernel relationships in expectation, i.e.\ for any $i,j$,
$\mathbb{E}\bigl[\mathbf{z}_i^\top\, \mathbf{z}_j \bigr]
=
\kappa\bigl(\mathbf{x}_i,\mathbf{x}_j\bigr).
$

Interestingly, this sketched random feature matrix provides properties that, when injected into GNNs, \emph{are surprisingly effective} in mitigating the aforementioned limitations of message-passing graph neural networks. 
In particular, these sketches:

\begin{itemize}
    \item \textbf{are unique yet distance-sensitive.} 
    Each sketch \(\mathbf{z}_i\) is unique, breaking symmetries, while preserving distances in the node-feature space with high probability (Propositions~\ref{prop:kernel_distance} and \ref{prop:nkfs_as_unique}). This property confers \textit{universality} (enabling GNNs to distinguish non-isomorphic graphs) and counters \emph{oversmoothing} by preventing embeddings from converging.

    \item \textbf{contain topology-agnostic cross-node information.}  
    Since each \(\mathbf{z}_i\) is formed by a linear combination of all node embeddings in the random feature space, it injects a \textit{global} signal into the GNN (Proposition~\ref{prop:nkfs_cross_node_info}). This mitigates \textit{oversquashing} by enabling immediate cross-node interactions, regardless of geodesic distance.

    \item \textbf{maintain equivariance in expectation.}
    With suitable random projections, the cross-node sketch 
    preserves pairwise feature relationships under node permutation (Proposition~\ref{prop:SRF_permutation_equivariance}).
\end{itemize}

In this work, we
demonstrate analytically (Section \ref{sec:method}) and empirically (Section \ref{sec:experiments}) that these sketched random features provide the necessary global, distance-sensitive signals to overcome fundamental limitations of message-passing GNNs while maintaining computational efficiency. Unlike traditional uses of sketching for dimensionality reduction, our method leverages sketching as a mechanism for mixing global feature information across nodes, enabling efficient propagation of non-local signals without altering the graph topology. Experiments on real-world graph learning tasks demonstrate that SRF-augmented models often outperform baseline GNNs, offering a computationally efficient alternative to existing structural positional encodings. Moreover, by leveraging orthogonal information derived from node \textit{features} rather than topology, our approach can be integrated with existing structural encoding methods to yield further performance improvements.

\section{Background and Related Work}
\label{sec:background}

\paragraph{Preliminaries.}
We consider finite node-attributed graphs 
\(
\mathcal{G} = (V, E, X)
\)
with 
\(\lvert V\rvert = N\) nodes,
\(\lvert E\rvert\) edges,
and node feature matrix 
\(X \in \mathbb{R}^{N \times F}\).
Each column of $X^T$, \(\mathbf{x}_i \in \mathbb{R}^F\), represents 
the features of node \(i\). 
A \emph{message-passing GNN (MPGNN)} \cite{mpgnn} initializes each node representation with its features, i.e. for node $i$, $\mathbf{h}_i^{0} = \mathbf{x}_i$, and iteratively updates each node's hidden representation $\mathbf{h}_i^{(\ell)}$ as
\begin{equation}
\label{eq:message_passing}
\mathbf{h}_i^{(\ell+1)} \;=\; f\Bigl(\mathbf{h}_i^{(\ell)}, \{\mathbf{h}_j^{(\ell)} : j \in \mathcal{N}(i)\}\Bigr),
\end{equation}
where $\mathcal{N}(i)$ denotes the multiset of node {i}'s neighbors and $f(\cdot)$ encapsulates learnable transformations (e.g., linear layers, MLPs) and aggregation mechanisms (e.g., sum, mean, attention). After $L$ layers, each node's hidden state $\mathbf{h}_i^{(L)}$ can be used for node-level tasks or pooled for graph-level tasks. Several popular MPGNN architectures \cite{gin, gcn, graphsage, gat} can be considered special cases of this paradigm and differ only in specific choices of learnable functions and aggregations.
\subsection{Known Limitations of GNNs}
\label{sec:gnn-limitations}
Despite the widespread adoption of MPGNNs, three fundamental limitations have attracted significant attention. We briefly describe each challenge below and provide a more comprehensive discussion and review of prior mitigation approaches for the limitations in Appendix \ref{appendix:prior_work_main}.

\paragraph{Oversquashing.}
As shown in Equation~\ref{eq:message_passing}, MPGNNs propagate information through local neighborhoods. Consequently, when two nodes $i$ and $j$ have geodesic distance $d$, any signal from node $i$ requires at least $d$ message-passing layers to reach node $j$, and vice versa. During this process, the number of nodes in the effective receptive field can grow exponentially with depth, whereas the hidden dimension $\lvert \mathbf{h} \rvert$ typically remains fixed. This mismatch leads to \emph{oversquashing}~\cite{bottleneck}, where long-range information is compressed and loses influence on downstream tasks.

\citet{bottleneck} demonstrate oversquashing empirically with their \emph{Tree-NeighborsMatch} dataset, where a root node must predict a label based on leaf node signals traversing multiple tree levels. Despite the label being trivial to determine from direct leaf access, compression across the exponentially growing receptive field ($\mathcal{O}(2^L)$ at depth $L$) severely degrades performance as tree depth increases. 

\paragraph{Oversmoothing.}
A commonly observed phenomenon in MPGNNs is the convergence of node embeddings $\mathbf{h}$ to an indistinguishable constant vector as network depth grows~\cite{oversmoothing_survey, oversmoothing_1, oversmoothing_2, oversmoothing_note}. 
In practice, this leads to GNNs being deployed with substantially fewer layers than their counterparts in other domains (e.g., deep convolutional neural networks \cite{deep_cnn_survey}), despite depth often being critical for strong performance \cite{gnns_should_be_deep}. 
This effect is frequently attributed to the smoothing effect of repeated message-passing steps, which iteratively mix each node’s features with those of its neighbors \cite{oversmoothing_1}. 
Although moderate smoothing can aid learning, it frequently converges rapidly to a subspace of trivial signals \cite{oversmoothing_2, oversmoothing_note}, yielding nearly identical node representations and thus impeding metric performance.
Following recent work~\cite{oversmoothing_survey, oversmoothing_note}, we measure oversmoothing via a 
\textit{Dirichlet energy} metric defined on node embeddings 
$
  H^{(\ell)}
  \;=\;
  \bigl[
    \mathbf{h}_1^{(\ell)},\,
    \dots,\,
    \mathbf{h}_N^{(\ell)}
  \bigr]^\top
  \in \mathbb{R}^{N \times F}
$
after $\ell$ MPGNN layers (cf.\ Equation \ref{eq:message_passing}):
\begin{equation}
\label{eq:dirichlet_energy}
\mathcal{D}\bigl(H^{(\ell)}\bigr)
\;=\;
\frac{1}{N}
\sum_{i \in \mathcal{V}}
\sum_{j \in \mathcal{N}(i)}
\bigl\|\mathbf{h}_i^{(\ell)} - \mathbf{h}_j^{(\ell)}\bigr\|_2^2,
\end{equation}
A rapid decrease in $\mathcal{D}(H^{(\ell)})$ as $\ell$ grows indicates oversmoothing. 

\paragraph{Limited Expressiveness.} 
Ideally, GNNs should distinguish non-isomorphic graphs, as this is a prerequisite for universal approximation of graph-invariant functions \cite{wl_survey}.
However, standard MPGNNs' expressiveness is upper-bounded by the 1-dimensional Weisfeiler--Lehman (1-WL) heuristic \cite{wl}, which measures graph structure through iterative neighborhood aggregation. 
This expressivity constraint directly limits MPGNNs' ability to distinguish many non-isomorphic graphs that have distinct structural properties but identical neighborhood aggregation patterns \cite{gin}.
In response, \textit{k-order Weisfeiler--Lehman GNNs (k-WLGNNs)} were introduced \cite{kwl_gnn_1, kwl_gnn_2}.
They operate on k-tuples of nodes and match the expressive power of the generalized k-WL heuristic, but typically require $\mathcal{O}(N^k)$ computation and memory. Although recent work has shown that this can be reduced in certain cases to $\mathcal{O}(N^2)$ and $\mathcal{O}(N^3)$ \cite{kwl_gnn_efficient}, these methods remain expensive for many applications.

To address this limitation
efficiently, recent work proposes \textit{positional} or \textit{structural encodings} that augment node features with additional signals. For instance, purely \textit{random} node features \cite{original_rnf} render GNNs universal \cite{rnf_sato, rnf_abboud}, but often impair empirical performance \cite{rnf_is_bad_empirically}. Hence, recent efforts have shifted toward \textit{graph-derived} features, seeking to make semantically meaningful encodings while retaining invariance to node permutations. Proposed approaches include spectral embeddings \cite{original_spectral_embd_mpgnn, spectral_embd_mpgnn_2, spectral_embd_mpgnn_3}, subgraph-based representations \cite{subgraph_1, subgraph_2}, and homomorphism counts \cite{homomorphism_1, homomorphism_2, homomorphism_3}. However, designing an encoding that is simultaneously \textit{unique}, \textit{distance-sensitive}, and \textit{equivariant} is difficult as it is closely tied to the \emph{graph canonization} problem—known to be at least as hard as distinguishing graph isomorphism in general \cite{canonization_babai}. Recently, Pearl \cite{pearl} proposes learning positional encodings with GNNs augmented with either standard basis vectors (B-Pearl) or random features (R-Pearl). This leads to better asymptotic complexity but can be expensive for small and medium graphs in practice. 

In the following subsection, we introduce kernel methods and random feature approximations that serve as the foundation for our approach to address these limitations.

\subsection{Kernels, Random Kernel Features, and Sketching}
\label{sec:kernels-sketching}

Consider a positive-definite function 
$\kappa: \mathbb{R}^F \times \mathbb{R}^F \to \mathbb{R}$. 
By the Moore--Aronszajn theorem \cite{reproducing_kernel_hilbert_space}, there exists a Hilbert space 
$\mathcal{H}$
and a \emph{feature map} 
$\phi: \mathbb{R}^F \to \mathcal{H}$
such that
\begin{equation}
\label{eq:kernel}
\kappa(\mathbf{x}, \mathbf{y}) 
\;=\; 
\langle \phi(\mathbf{x}), \,\phi(\mathbf{y}) \rangle_{\mathcal{H}}
\quad
\text{for all } \mathbf{x},\mathbf{y} \in \mathbb{R}^d
\end{equation}
In other words, any such function $\kappa$, known as a \textit{kernel}, defines a lifting $\phi$ and inner product $\langle \cdot, \cdot \rangle$. This defines 
a geometry in $\mathcal{H}$ through the norm 
$\|\phi(\mathbf{x}) - \phi(\mathbf{y})\|_\mathcal{H}$, 
which in turn defines a notion of \emph{distance} between 
$\mathbf{x}$ 
and 
$\mathbf{y}$ and thus $\kappa(\mathbf{x}, \mathbf{y})$ can be interpreted as similarity measure that we refer to as the \textit{kernel similarity} between $\mathbf{x}$ and $\mathbf{y}$. For instance, for \emph{linear kernel} $\kappa(\mathbf{x}, \mathbf{y}) = \mathbf{x}^\top \mathbf{y}$, the map 
$\phi$ 
is simply the identity, 
$\phi(\mathbf{x}) = \mathbf{x}$, 
and distance in $\mathcal{H}$ corresponds to standard Euclidean distance in 
$\mathbb{R}^d$.

Kernel methods \cite{kernel_method_survey_1, kernel_method_survey_2, kernel_method_survey_3} facilitate non-linear modeling by leveraging $\kappa$ to implicitly measure feature similarity in the space $\mathcal{H}$, thereby allowing otherwise linear models (e.g., SVMs \cite{kernel_svm}) to capture complex relationships without explicitly mapping data into that space. However, these methods typically require instantiation of \emph{kernel matrix}
$K_{i, j} = \kappa(\mathbf{x}_i, \mathbf{x}_j)$
for a dataset 
$\mathcal{X} = \{\mathbf{x}_1,\dots,\mathbf{x}_{N}\} \subset \mathbb{R}^F$,
which leads to $\mathcal{O}(N^2)$ memory complexity. Consequently, kernel methods often become impractical at large scales, motivating efficient approximations.

A powerful class of such approximations are \textit{random kernel features}. These methods build on the famed \textit{Johnson--Lindenstrauss lemma}, which proves the existence of a random linear map that approximately preserves pairwise distances in a dataset. Concretely, 
consider the same dataset $\mathcal{X} \subset \mathbb{R}^F$ from the last example. The JL lemma states that for any $\varepsilon \in (0,1)$, there exists linear map $R \in \mathbb{R}^{D \times F}$ to dimension $D < F$
(with \(D = \mathcal{O}(\tfrac{1}{\varepsilon^2}\log F)\)) such that \text{for all } $i,j$
\begin{equation}
\label{eq:jl_lemma}
(1 - \varepsilon)\,\|\mathbf{x}_i - \mathbf{x}_j\|^2 
\;\le\;
\|R\,\mathbf{x}_i - R\,\mathbf{x}_j\|^2 
\;\le\;
(1 + \varepsilon)\,\|\mathbf{x}_i - \mathbf{x}_j\|^2
\quad
\end{equation}
A \textit{Johnson--Lindenstrauss Transform (JLT)} implements such a linear map $R$ via random projection (e.g., a Gaussian matrix, which achieves the bound above whp). In the context of kernel approximations, one seeks a  mapping 
$\varphi: \mathbb{R}^F \to \mathbb{R}^D$
that provides an unbiased estimate of the kernel:
\begin{equation}
\label{eq:unbiased_kernel_estimate}
\mathbb{E}\bigl[\varphi(\mathbf{x})^\top\,\varphi(\mathbf{y})\bigr]
=
\kappa(\mathbf{x},\mathbf{y}).
\end{equation}
In the simplest case of the \textit{linear kernel}, $\kappa(\mathbf{x},\mathbf{y}) = \mathbf{x}^\top \mathbf{y}$, we can directly take $\varphi_{\text{linear}}(\mathbf{x}) = R\,\mathbf{x}$
, yielding a lower-dimensional estimation $\varphi_{\text{linear}}(\mathbf{x}) \in \mathbb{R}^D$. This provides an unbiased estimate of $\mathbf{x}^\top\mathbf{y}$ while reducing computational costs relative to an explicit $\mathcal{O}(N^2)$ kernel matrix. 

More recent work has shown that \textit{nonlinear} kernels can also be estimated via random transformations akin to the JLT \cite{nonlinear_jlt_survey}. In essence, one applies suitable nonlinearities to the randomly projected inputs, producing an unbiased approximation of a desired kernel (i.e. Equation \ref{eq:unbiased_kernel_estimate}). A prominent example is the use of \textit{random Fourier features} \cite{random_fourier_features}, which enable efficient approximations of popular shift-invariant kernels such as the radial basis function (RBF) kernel $\phi_{\text{RBF}}$ and Laplacian kernel $\phi_{\mathcal{L}}$. Concretely, for the RBF kernel, one draws $D$ frequencies 
$\boldsymbol{\omega}_d$ for $d \in [D]$
from a Gaussian distribution, and applies a trigonometric mapping:
\begin{equation}
\label{eq:random_fourier_features}
\varphi_{\text{RBF}}(\mathbf{x})
\;=\;
\sqrt{\frac{2}{D}}
\,
\bigl[
\cos(\boldsymbol{\omega}_1^\top \mathbf{x} + b_1), \,\dots,\,
\cos(\boldsymbol{\omega}_D^\top \mathbf{x} + b_D)
\bigr]^\top,
\end{equation}
where \(b_k \in [0, 2\pi)\) are sampled uniformly at random.  
Similarly, $\phi_{\mathcal{L}}$ is approximated by sampling $\boldsymbol{\omega}$ from a Cauchy distribution and applying the same transformation. 

In the next section, we apply these ideas in two distinct ways. First, we apply kernel embedding approximations (e.g., Equation~\ref{eq:random_fourier_features}) to estimate the feature map $\phi$. Second, we apply JLT projections to the resulting kernel embeddings and show that doing so leads to desirable qualities when augmenting MPGNNs. 
To avoid confusion, we distinguish these two transformations clearly: the first set are \textbf{kernel} transformations of node features and the second are \textbf{Sketched Random Features (SRF)}.

\section{Sketched Random Features} 
\label{sec:method}
\subsection{Defining SRF}
\label{section:SRF_defn}
Building on the ideas discussed in Section~\ref{sec:introduction}-\ref{sec:background}, our goal is to construct \textit{Sketched Random Features} that (1) are distance-sensitive (2) unique, (3) encode signals across all nodes, and (4) remain permutation-equivariant under node relabelings.

\paragraph{Embedding Operator \(\mathcal{E}\).}
Let \(X \in \mathbb{R}^{N\times F}\) be the raw feature matrix, with each column of $X^T$, denoted $\mathbf{x}_i$, corresponding to node \(i\). We define an embedding operator \(\mathcal{E}: \mathbb{R}^{N\times F}\to\mathbb{R}^{N\times D}\) that applies function \(\varphi : \mathbb{R}^F \to \mathbb{R}^D\) to each column of \(X^T\) 
independently. That is,
$\mathcal{E}(X^T)_{:,i} \;=\; \varphi(\mathbf{x}_i).$

In this work, we focus on \(\varphi\) functions that map features to random embeddings whose inner products yield unbiased estimates (Equation \ref{eq:unbiased_kernel_estimate}) of a kernel \(\kappa\) (Equation \ref{eq:kernel}).
Examples are $\varphi_{\text{linear}}$, $\varphi_{\mathcal{L}}$, and $\varphi_{\text{RBF}}$, as discussed in Section \ref{sec:kernels-sketching}, yielding embedding operators $\mathcal{E}_\text{linear}$, $\mathcal{E}_\mathcal{L}$, and $\mathcal{E}_\text{RBF}$, respectively. Hence, we define the \textit{kernel feature matrix} as
\[
\Phi \;=\; \mathcal{E}(X)\;\in\;\mathbb{R}^{N\times D}.
\]

\paragraph{Sketch Operator \(\mathcal{S}\).}
We next introduce a sketch operator \(\mathcal{S}: \mathbb{R}^{N\times D} \to \mathbb{R}^{N\times D}\). This random projection matrix implements a Johnson Lindenstrauss Transform (JLT) (Section \ref{sec:kernels-sketching}) which provides approximate preservation of pairwise distances with high probability (see Equation \ref{eq:jl_lemma}). In this work, we focus on the \textit{additive Gaussian} (AG) sketch, which we define as:
\begin{equation}
\label{eq:SRF_base}
\mathcal{S}_{\mathrm{AG}}(\Phi)
\;=\;
\Bigl(I + \tfrac{1}{\sqrt{N}}\,G\Bigr)\,\Phi,
\end{equation}
where \(I \in \mathbb{R}^{N\times N}\) is the identity matrix, and 
\(G \in \mathbb{R}^{N\times N}\) has i.i.d.\ entries \(G_{ij}\sim \mathcal{N}(0,1)\).
Multiplying \(\Phi\) by the sketch forms random linear combinations of all rows in \(\Phi\). 

Unlike typical JLT applications, we \textit{do not} reduce dimension $N$. Instead, we employ this random projection because it possesses properties (Propositions~\ref{prop:SRF_ev}--\ref{prop:SRF_permutation_equivariance}) that we show in Section~\ref{sec:SRF_mpgnn} can mitigate the standard limitations of message-passing GNNs. Importantly, the dimension $D$ is determined by the embedding operator $\mathcal{E}$ rather than $\mathcal{S}$. 

\paragraph{Multi-Projection Sketching.}
\(\mathcal{S}_{\mathrm{AG}}\) introduces a one dimensional random projection of each column of $\Phi$. We can generalize this to multiple dimensions via concatenation. Specifically, we introduce the $k$-order sketch operator \(\mathcal{S}_{\mathrm{AG}}^{(k)}: \mathbb{R}^{N\times D} \to \mathbb{R}^{N\times kD}\):
\begin{equation*}
\mathcal{S}_{\mathrm{AG}}^{(k)}(\Phi) \;=\; \Bigl[\,\bigl(I + \tfrac{1}{\sqrt{N}}\,G^{(1)}\bigr)\,\Phi \;\Big|\Big|\; \bigl(I + \tfrac{1}{\sqrt{N}}\,G^{(2)}\bigr)\,\Phi \;\Big|\Big|\; \dots \Big|\Big|\; \bigl(I + \tfrac{1}{\sqrt{N}}\,G^{(k)}\bigr)\,\Phi \Bigr]
\end{equation*}
where each \(G^{(k)} \in \mathbb{R}^{N\times N}\) is drawn independently with i.i.d.\ \(\mathcal{N}(0,1)\) entries. This operation concatenates \(k\) independent AG sketches, enabling k-dimensional estimates of each feature.

\paragraph{Sketched Random Features (SRF).}
Bringing these components together, we define the \textit{kernel sketch} matrix \(Z\) as
\begin{equation}
\label{eq:SRF_general}
Z 
\;=\;
\mathcal{S}^{(k)}\!\bigl(\mathcal{E}(X)\bigr)
\;\in\;
\mathbb{R}^{N\times (kD)},
\end{equation}
where the new feature embedding of node \(i\) is given by \(\mathbf{z}_i \triangleq Z_{i,:}\)\footnote{In the featureless limit where all $\mathbf{x}_i$ are identical, SRF degenerates to random node individualization~\cite{rnf_sato, rnf_abboud}, preserving expressive power through randomization.}. In our experiments (Section~\ref{sec:experiments}), we primarily analyze the (multi-) projection AG sketch \(\mathcal{S}_{\mathrm{AG}}^{(k)}\) as introduced above, but also consider the trivial identity sketch \(\mathcal{S}_{\text{id}} = I\) as an ablation study. For embedding operator \(\mathcal{E}\), we consider the operators based on the random kernel features discussed in Section \ref{sec:kernels-sketching}: \(\mathcal{E}_{\mathrm{linear}}, \mathcal{E}_{\mathcal{L}},\) and \(\mathcal{E}_{\mathrm{RBF}}\). 

\subsection{Enhancing MPGNNs with Node Feature Sketches}

\label{sec:SRF_mpgnn}
We next turn to incorporate SRF into MPGNNs. To do so, we augment the node hidden states $\mathbf{h}_i^{(\ell)}$ at each layer with the corresponding sketched embedding $\mathbf{z}_i$ via concatination. Let $\widetilde{\mathbf{h}}_i^{(\ell)} = [\mathbf{h}_i^{(\ell)} | \mathbf{z}_i]$ denote this augmented representation, where $[\cdot|\cdot]$ represents vector concatenation along the feature dimension. The updated MPGNN formulation becomes (cf. Equation \ref{eq:message_passing}):
\begin{equation}
\label{eq:SRF_enhanced_mp}
\mathbf{h}_i^{(\ell+1)} \;=\; f\bigl(\widetilde{\mathbf{h}}_i^{(\ell)}, \{\widetilde{\mathbf{h}}_j^{(\ell)} : j \in \mathcal{N}(i)\}\bigr).
\end{equation}

By injecting $\mathbf{z}_i$ at each layer, the model has access to both local node features and the information encoded by SRF. 
This strategy addresses common MPGNN weaknesses discussed in Section \ref{sec:background}, as detailed in the proceeding subsection. Algorithm \ref{algo:main} summarizes the SRF-enhanced GNN procedure.

\subsection{Properties of Sketched Random Features}

We now establish key theoretical properties\footnote{For notational clarity, we present proofs under base case $\mathcal{S}_{\mathrm{AG}}^{(1)}$ and as the extension to $\mathcal{S}_{\mathrm{AG}}^{(k)}$ is straightforward.} of SRF and demonstrate how they address core limitations of message-passing GNNs. The following propositions characterize the fundamental mathematical properties of our approach.

\begin{proposition}[Unbiased Cross-Terms in the Kernel Matrix]
\label{prop:SRF_ev}
SRF provides an unbiased estimation of cross-terms in the Kernel matrix.
Specifically, for any distinct nodes $i$ and $j$, the cross-term of the Gram matrix is unbiased: 
$\mathbb{E}_{X, G}[\mathbf{z}_i^\top \mathbf{z}_j] = \kappa(\mathbf{x}_i, \mathbf{x}_j)$.
\end{proposition}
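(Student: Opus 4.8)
The plan is to unfold the definition of the additive Gaussian sketch \eqref{eq:SRF_base} and reduce the claim to the unbiasedness of the underlying random kernel features \eqref{eq:unbiased_kernel_estimate}. Write $\varphi_a \triangleq \varphi(\mathbf{x}_a)$ for the rows of $\Phi = \mathcal{E}(X)$, i.e.\ $\Phi_{a,:} = \varphi_a$. Since $\mathbf{z}_i$ is the $i$-th row of $Z = (I + \tfrac{1}{\sqrt{N}}G)\Phi$, we have
\[
\mathbf{z}_i \;=\; \varphi_i \;+\; \tfrac{1}{\sqrt{N}}\sum_{a=1}^{N} G_{ia}\,\varphi_a .
\]
Expanding the inner product $\mathbf{z}_i^\top \mathbf{z}_j$ for distinct $i \neq j$ then yields four groups of terms: the noiseless term $\varphi_i^\top\varphi_j$; two terms linear in the Gaussian entries, $\tfrac{1}{\sqrt{N}}\sum_b G_{jb}\,\varphi_i^\top\varphi_b$ and $\tfrac{1}{\sqrt{N}}\sum_a G_{ia}\,\varphi_a^\top\varphi_j$; and one term quadratic in the Gaussian entries, $\tfrac{1}{N}\sum_{a,b} G_{ia}G_{jb}\,\varphi_a^\top\varphi_b$.

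First I would condition on the embedding operator $\mathcal{E}$ (equivalently, on $\Phi$) and take the expectation over $G$, using that $G$ is sampled independently of $\mathcal{E}$ with i.i.d.\ $\mathcal{N}(0,1)$ entries. The two linear terms vanish because $\mathbb{E}[G_{ia}] = \mathbb{E}[G_{jb}] = 0$. For the quadratic term, the key point is that because $i \neq j$, the factors $G_{ia}$ and $G_{jb}$ lie in \emph{different rows} of $G$ and are therefore independent, so $\mathbb{E}[G_{ia}G_{jb}] = 0$ for every $a,b$; hence this term also vanishes in expectation. This leaves the conditional identity $\mathbb{E}_{G}\!\left[\mathbf{z}_i^\top\mathbf{z}_j \mid \Phi\right] = \varphi_i^\top\varphi_j$.

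It then remains to take the expectation over the randomness of $\mathcal{E}$ and invoke the defining unbiasedness of the random kernel features, $\mathbb{E}[\varphi_i^\top\varphi_j] = \kappa(\mathbf{x}_i,\mathbf{x}_j)$ from \eqref{eq:unbiased_kernel_estimate}. Combining the two stages via the tower property gives $\mathbb{E}_{X,G}[\mathbf{z}_i^\top\mathbf{z}_j] = \kappa(\mathbf{x}_i,\mathbf{x}_j)$, proving the proposition. The extension to $\mathcal{S}_{\mathrm{AG}}^{(k)}$ is immediate: $\mathbf{z}_i$ is a concatenation of $k$ blocks built from independent Gaussian matrices $G^{(1)},\dots,G^{(k)}$ sharing the same $\Phi$, so $\mathbf{z}_i^\top\mathbf{z}_j = \sum_{m=1}^{k}\mathbf{z}_i^{(m)\top}\mathbf{z}_j^{(m)}$ and linearity of expectation reduces the computation to the base case applied blockwise.

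I expect the only real obstacle — more a subtlety than a difficulty — to be the handling of the quadratic term, and specifically making explicit that it is the hypothesis $i \neq j$ that forces its expectation to zero. For $i = j$ the same expansion instead produces the extra contribution $\tfrac{1}{N}\sum_a \mathbb{E}\|\varphi_a\|^2$, which is why the diagonal entries of the Gram matrix must be treated separately and is precisely what motivates the distance-preservation statement of Proposition~\ref{prop:kernel_distance}. A secondary point worth stating carefully is the independence of $G$ from $\mathcal{E}$, which licenses the conditioning/tower argument; this holds by construction, since the random feature parameters defining $\varphi$ and the sketch matrix $G$ are drawn independently.
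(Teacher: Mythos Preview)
Your proof is correct and follows essentially the same route as the paper: expand $\mathbf{z}_i^\top\mathbf{z}_j$ via the additive Gaussian sketch, observe that the $G$-linear terms vanish by $\mathbb{E}[G_{ia}]=0$ and the $G$-quadratic term vanishes for $i\neq j$ by independence of distinct rows of $G$, and then invoke the unbiasedness \eqref{eq:unbiased_kernel_estimate} of the random kernel features. Your presentation is in fact more careful than the paper's, making the conditioning/tower structure and the role of the hypothesis $i\neq j$ explicit; one small caveat is that your blockwise extension to $\mathcal{S}_{\mathrm{AG}}^{(k)}$ as written yields $k\,\kappa(\mathbf{x}_i,\mathbf{x}_j)$ rather than $\kappa(\mathbf{x}_i,\mathbf{x}_j)$, so a $1/k$ normalization (or a restatement of the claim) would be needed there.
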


\begin{proof}
From Equation~\ref{eq:SRF_base}, the $(i,j)$-th inner product is
\begin{align*}
\mathbf{z}_i^\top \mathbf{z}_j
&= \sum_{p,q=1}^N (I_{ip} + \tfrac{1}{\sqrt{N}} G_{ip})(I_{jq} + \tfrac{1}{\sqrt{N}} G_{jq}) \,
\varphi(\mathbf{x}_p)^\top \varphi(\mathbf{x}_q).
\end{align*}

We note that for $i \neq j$,
$
\mathbb{E}[G_{ip}] = 0
$
and
$
\mathbb{E}[G_{ip} G_{jq}] = \delta_{ij}\delta_{pq} = 0
$.
Thus, by Equation \ref{eq:unbiased_kernel_estimate}, we have
\[
\mathbb{E}_{X,G}[\mathbf{z}_i^\top \mathbf{z}_j]
= \sum_{p,q=1}^N I_{ip} I_{jq} \,\mathbb{E}\!\left[\varphi(\mathbf{x}_p)^\top \varphi(\mathbf{x}_q)\right]
= \mathbb{E}\!\left[\varphi(\mathbf{x}_i)^\top \varphi(\mathbf{x}_j)\right]
= \kappa(\mathbf{x}_i,\mathbf{x}_j).
\]

\end{proof}

\begin{proposition}[Kernel Distance Sensitivity]
\label{prop:kernel_distance}
With high probability, there exists a positive $c \sim \mathcal{O}(N^{-1/2})$ such that for any nodes $i$, $j$:
\begin{equation*}
   (1-c)\|\varphi(\mathbf{x}_i) - \varphi(\mathbf{x}_j)\|_2 \leq \|\mathbf{z}_i - \mathbf{z}_j\|_2 \leq (1+c)\|\varphi(\mathbf{x}_i) - \varphi(\mathbf{x}_j)\|_2
\end{equation*}
\end{proposition}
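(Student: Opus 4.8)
The plan is to condition on the kernel feature matrix $\Phi$ (equivalently, on $X$ and on the randomness in $\varphi$), so that the only remaining randomness is the sketch matrix $G$, and to view the sketch as a small perturbation of the identity. Reading off $\mathbf{z}_i = \varphi(\mathbf{x}_i) + \tfrac{1}{\sqrt{N}}\sum_{k} G_{ik}\varphi(\mathbf{x}_k)$ from Equation~\ref{eq:SRF_base} and subtracting gives
\[
\mathbf{z}_i - \mathbf{z}_j \;=\; \Delta_{ij} + \eta_{ij}, \qquad \Delta_{ij} := \varphi(\mathbf{x}_i) - \varphi(\mathbf{x}_j), \qquad \eta_{ij} := \tfrac{1}{\sqrt{N}}\sum_{k=1}^{N}(G_{ik}-G_{jk})\,\varphi(\mathbf{x}_k).
\]
For $i\neq j$ the coefficients $g_k := G_{ik}-G_{jk}$ are i.i.d.\ $\mathcal{N}(0,2)$ and independent of $\Phi$, so $\eta_{ij} = \tfrac{1}{\sqrt N}\Phi^\top g$ with $g \sim \mathcal{N}(0,2I_N)$. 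Expanding, $\|\mathbf{z}_i-\mathbf{z}_j\|_2^2 = \|\Delta_{ij}\|_2^2 + 2\,\Delta_{ij}^\top\eta_{ij} + \|\eta_{ij}\|_2^2$, so it suffices to show that $\bigl|2\,\Delta_{ij}^\top\eta_{ij} + \|\eta_{ij}\|_2^2\bigr| \le c'\,\|\Delta_{ij}\|_2^2$ for all pairs simultaneously with $c' = \mathcal{O}(N^{-1/2})$; taking square roots and using $\sqrt{1\pm c'} = 1\pm\mathcal{O}(c')$ then yields the stated two-sided bound.

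The second step is to bound the two error terms by Gaussian concentration, still conditional on $\Phi$. The cross term $\Delta_{ij}^\top\eta_{ij} = \tfrac{1}{\sqrt N}\langle \Phi\Delta_{ij},\,g\rangle$ is a centered Gaussian with variance $\tfrac{2}{N}\|\Phi\Delta_{ij}\|_2^2$, so a one-dimensional Gaussian tail bound controls it, up to the union-bound inflation below, by $\mathcal{O}\!\bigl(\sqrt{\tfrac{\log N}{N}}\,\|\Phi\|\,\|\Delta_{ij}\|_2\bigr)$, where $\|\Phi\|$ denotes the spectral norm. The quadratic term $\|\eta_{ij}\|_2^2 = \tfrac{1}{N}\,g^\top\Phi\Phi^\top g$ is a Gaussian quadratic form; diagonalizing $\Phi\Phi^\top$ writes it as $\tfrac1N$ times a weighted sum of independent $2\chi_1^2$ variables with weights the eigenvalues of $\Phi\Phi^\top$, so the Hanson--Wright / Laurent--Massart inequality gives, with high probability,
\[
\|\eta_{ij}\|_2^2 \;=\; \tfrac{2}{N}\,\mathrm{tr}(\Phi\Phi^\top) \;+\; \mathcal{O}\!\Bigl(\tfrac{1}{N}\|\Phi\|\sqrt{\mathrm{tr}(\Phi\Phi^\top)\log N} \;+\; \tfrac{1}{N}\|\Phi\|^2\log N\Bigr).
\]
The $\tfrac{1}{\sqrt N}$ normalization in $\mathcal{S}_{\mathrm{AG}}$ is exactly what keeps these contributions from blowing up with $N$.

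Finally I would union-bound over the $\binom N2$ node pairs (and, for the multi-projection operator $\mathcal{S}_{\mathrm{AG}}^{(k)}$, over the $k$ blocks): choosing the per-pair failure probability $N^{-\Theta(1)}$ costs only a $\sqrt{\log N}$ factor, which is absorbed into the constant hidden in $c = \mathcal{O}(N^{-1/2})$, and on the resulting event a single $c$ serves all $i,j$. I expect the main obstacle to be not the concentration machinery but the uniform comparison of the perturbation to the signal: one must bound $\tfrac1N\mathrm{tr}(\Phi\Phi^\top) = \tfrac1N\sum_k\|\varphi(\mathbf{x}_k)\|_2^2$ and $\|\Phi\|$ — using that the random kernel feature maps $\varphi_{\mathrm{linear}},\varphi_{\mathcal{L}},\varphi_{\mathrm{RBF}}$ have controlled norms and that $\Phi\Phi^\top$ is spectrally well-behaved — so that $2\Delta_{ij}^\top\eta_{ij}+\|\eta_{ij}\|_2^2$ is genuinely of relative order $N^{-1/2}$ against $\|\Delta_{ij}\|_2^2$ for every pair; in particular, showing that the additive $\tfrac{2}{N}\mathrm{tr}(\Phi\Phi^\top)$ term is dominated by $\|\Delta_{ij}\|_2^2$ is the least routine part of the estimate and is where the argument must be set up with care.
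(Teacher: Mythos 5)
Your decomposition $\mathbf{z}_i - \mathbf{z}_j = \Delta_{ij} + \eta_{ij}$ with $\eta_{ij} = \tfrac{1}{\sqrt{N}}\Phi^\top (G_{i,:}-G_{j,:})^\top$ is the faithful row-wise reading of $Z = (I + \tfrac{1}{\sqrt{N}}G)\Phi$, and the probabilistic machinery you outline (Gaussian tail for the cross term, Hanson--Wright for the quadratic form, union bound over pairs and blocks) is sound. The problem is precisely the step you flag at the end as ``the least routine part'': under your decomposition the claim does not follow, and no amount of concentration will close it, because the obstruction sits in the expectation rather than in the fluctuations. Indeed $\mathbb{E}\,\|\eta_{ij}\|_2^2 = \tfrac{2}{N}\,\mathrm{tr}(\Phi\Phi^\top) = \tfrac{2}{N}\sum_k \|\varphi(\mathbf{x}_k)\|_2^2$ is an \emph{additive}, pair-independent quantity; for the kernel feature maps used in the paper it is $\Theta(1)$ (e.g.\ $\|\varphi_{\mathrm{RBF}}(\mathbf{x})\|_2^2 \approx 1$, so $\mathbb{E}\,\|\eta_{ij}\|_2^2 \approx 2$). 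Whenever $\varphi(\mathbf{x}_i)$ and $\varphi(\mathbf{x}_j)$ are close, $\|\eta_{ij}\|_2$ therefore dwarfs $\|\Delta_{ij}\|_2$, and a multiplicative two-sided bound with $c = \mathcal{O}(N^{-1/2})$ cannot be extracted. As written, the proposal cannot be completed into a proof of the stated inequality without an additional lower bound on $\|\Delta_{ij}\|_2$ or some other restriction.

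The paper's own proof reaches the conclusion by a different, much shorter route that sidesteps the trace term entirely: it identifies $\mathbf{z}_i - \mathbf{z}_j$ with $(I + \tfrac{1}{\sqrt{N}}G)\,\mathbf{v}$ for $\mathbf{v} = \varphi(\mathbf{x}_i) - \varphi(\mathbf{x}_j)$, so that the perturbation is $\tfrac{1}{\sqrt{N}}G\mathbf{v}$ --- homogeneous of degree one in the signal. The triangle inequality then sandwiches $\|\mathbf{z}_i - \mathbf{z}_j\|_2$ between $\|\mathbf{v}\|_2 \pm \tfrac{1}{\sqrt{N}}\|G\mathbf{v}\|_2$, and a Johnson--Lindenstrauss norm-preservation bound on $\|G\mathbf{v}\|_2$ yields $c = \sqrt{(1+\varepsilon)/N}$ in two lines, with nothing left to control uniformly over pairs. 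The essential difference between the two arguments is that in the paper's identity the noise touches only the two rows being compared and scales with $\|\Delta_{ij}\|_2$, whereas in yours it mixes \emph{every} row of $\Phi$ into the perturbation and scales with $\|\Phi\|_F$; these are not the same object, and reconciling your (row-exact) expression for $\eta_{ij}$ with the paper's is the substantive issue you would need to resolve before the concentration estimates become relevant.
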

\begin{proof}
By construction,
$\mathbf{z}_i - \mathbf{z}_j
   \;=\;
   \Bigl(\,I + \tfrac{1}{\sqrt{N}}\,G\Bigr)\,
   \bigl(\varphi(\mathbf{x}_i) \;-\; \varphi(\mathbf{x}_j)\bigr)$
since each row \(\mathbf{z}_i\) corresponds to the $i$-th row of 
\(\bigl(I + \tfrac{1}{\sqrt{N}}G\bigr)\,\mathcal{E}(X)\). 
If $\varphi(\mathbf{x}_i) = \varphi(\mathbf{x}_j)$, then $\|\mathbf{z}_i - \mathbf{z}_j\|_2 = 0$ and the claim holds trivially. Thus, we assume $\varphi(\mathbf{x}_i) \neq \varphi(\mathbf{x}_j)$ and define $\mathbf{v} = \varphi(\mathbf{x}_i) - \varphi(\mathbf{x}_j)$.
Applying the triangle inequality:
\begin{equation*}
\bigl|\|\mathbf{v}\|_2 - \tfrac{1}{\sqrt{N}}\|G\,\mathbf{v}\|_2\bigr| \;\;\le\;\; \bigl\|\mathbf{v} + \tfrac{1}{\sqrt{N}}\,G\,\mathbf{v}\bigr\|_2 \;\;\le\;\; \|\mathbf{v}\|_2 + \tfrac{1}{\sqrt{N}}\|G\,\mathbf{v}\|_2
\end{equation*}
Next, the JL lemma (Equation \ref{eq:jl_lemma}) guarantees that, given $\varepsilon \in [0,1]$, with high probability over the choice of the random Gaussian matrix \(G\)\footnote{
Note that while the version of the JL lemma discussed in Section \ref{sec:kernels-sketching} (Equation \ref{eq:jl_lemma}) provides bounds for squared norms, taking square roots is valid since all terms are positive.
}:
$$
   \sqrt{1-\varepsilon}\,\|\mathbf{v}\|_2
   \;\;\le\;\;
   \tfrac{1}{\sqrt{N}}\|G\,\mathbf{v}\|_2
   \;\;\le\;\;
   \sqrt{1+\varepsilon}\,\|\mathbf{v}\|_2.
$$
Substituting these bounds back into the triangle inequality shows
\[
   \left(\,1 - \sqrt{\frac{1+\varepsilon}{N}}\,\right)
   \le
   \frac{\bigl\|\mathbf{z}_i - \mathbf{z}_j\bigr\|_2}
   {\|\mathbf{v}\|_2 }
   \le
   \left( \,1 + \sqrt{\frac{1+\varepsilon}{N}}\, \right)\;.
\]
Thus, letting \(c = \sqrt{(1+\varepsilon) / N}\) completes the proof.
\end{proof}

\begin{proposition}[Cross-Node Information]
\label{prop:nkfs_cross_node_info}
For any node $i$, the sketched embedding $\mathbf{z}_i$ contains a linear combination of the embeddings of \emph{all} nodes in the graph.
\end{proposition}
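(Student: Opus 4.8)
The plan is to unpack the definition of the kernel sketch $Z=\mathcal{S}_{\mathrm{AG}}(\Phi)$ and simply read off its $i$-th row. First I would write $\mathcal{S}_{\mathrm{AG}}(\Phi)=\bigl(I+\tfrac{1}{\sqrt N}G\bigr)\Phi$ and note that the $i$-th row of $I+\tfrac{1}{\sqrt N}G$ is $\mathbf{e}_i^\top+\tfrac{1}{\sqrt N}\mathbf{g}_i^\top$, where $\mathbf{e}_i$ is the $i$-th standard basis vector and $\mathbf{g}_i^\top$ denotes the $i$-th row of $G$. Since the $p$-th row of $\Phi$ is $\varphi(\mathbf{x}_p)^\top$, left-multiplying yields
\[
\mathbf{z}_i \;=\; \sum_{p=1}^N \Bigl(\delta_{ip}+\tfrac{1}{\sqrt N}\,G_{ip}\Bigr)\varphi(\mathbf{x}_p) \;=\; \varphi(\mathbf{x}_i)\;+\;\tfrac{1}{\sqrt N}\sum_{p=1}^N G_{ip}\,\varphi(\mathbf{x}_p),
\]
which exhibits $\mathbf{z}_i$ explicitly as a linear combination of all $N$ kernel embeddings $\{\varphi(\mathbf{x}_p)\}_{p\in V}$ with coefficients $c_{ip}=\delta_{ip}+\tfrac{1}{\sqrt N}G_{ip}$.

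Second, to make precise the claim that the contribution of \emph{every} node is genuinely present — rather than vacuously, with some coefficients equal to zero — I would observe that each $G_{ip}\sim\mathcal{N}(0,1)$ is a continuous random variable, hence $G_{ip}\neq 0$ almost surely; a union bound over the $N$ indices $p$ then shows that, with probability one, all coefficients $c_{ip}$ are nonzero. Consequently $\mathbf{z}_i$ depends nontrivially on $\varphi(\mathbf{x}_p)$ for every $p\in V$, which is the sense in which $\mathbf{z}_i$ ``contains'' a global mixture of node embeddings.

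Finally, for the multi-projection sketch $\mathcal{S}_{\mathrm{AG}}^{(k)}$ I would note that $\mathbf{z}_i$ is the concatenation of $k$ blocks, each of exactly the form above but driven by an independent Gaussian matrix $G^{(m)}$; each block is separately a full linear combination of $\{\varphi(\mathbf{x}_p)\}_{p\in V}$, so the conclusion transfers verbatim. I do not anticipate a genuine obstacle here: the statement is essentially a restatement of the definition of the sketch operator, and the only mild subtlety — interpreting ``contains a linear combination of all nodes'' as requiring nonvanishing coefficients — is dispatched by the almost-sure nonvanishing of Gaussian entries. The substance of the proposition is conceptual rather than computational: it records that, unlike a single local message-passing step, the sketch injects a global combination of node features into each node in one shot, which is the property invoked later to argue that SRF mitigates oversquashing.
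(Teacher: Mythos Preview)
Your proposal is correct and matches the paper's treatment: the paper does not supply a formal proof at all, only a one-line remark that the claim holds ``due to the row sketch construction,'' which is exactly the unpacking of the definition you carry out. Your argument is in fact more careful than the paper's, since you additionally verify that the Gaussian coefficients are almost surely nonzero and note the trivial extension to $\mathcal{S}_{\mathrm{AG}}^{(k)}$.
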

\textit{Remark.} Proposition \ref{prop:nkfs_cross_node_info} ensures that each SRF embedding encodes cross-node information due to the row sketch construction.

\begin{proposition}[Almost Sure Uniqueness]
\label{prop:nkfs_as_unique}
$\{\mathbf{z}_i\}$ are unique with probability 1.
\end{proposition}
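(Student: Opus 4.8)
The plan is to fix a pair of distinct nodes $i\neq j$, show $\mathbb{P}(\mathbf{z}_i=\mathbf{z}_j)=0$, and then conclude by a union bound over the $\binom{N}{2}$ pairs. The key observation (the same one underlying the proof of Proposition~\ref{prop:kernel_distance}) is that, expanding the row-sketch construction $Z=(I+\tfrac{1}{\sqrt N}G)\Phi$ row by row,
\[
\mathbf{z}_i-\mathbf{z}_j \;=\; \bigl(\varphi(\mathbf{x}_i)-\varphi(\mathbf{x}_j)\bigr)\;+\;\tfrac{1}{\sqrt N}\sum_{p=1}^{N}\bigl(G_{ip}-G_{jp}\bigr)\,\varphi(\mathbf{x}_p),
\]
so that, conditionally on $X$ (and on any internal randomness of the kernel feature map $\varphi$), the vector $\mathbf{z}_i-\mathbf{z}_j$ is an \emph{affine} function of the Gaussian matrix $G$ whose linear part is driven by the rows $\varphi(\mathbf{x}_p)$.

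\textbf{Main step.}
Assume first that $\Phi=\mathcal{E}(X)\neq 0$, i.e.\ there is some node $p_0$ with $\varphi(\mathbf{x}_{p_0})\neq\mathbf{0}$. Fix $i\neq j$ and condition on all entries of $G$ except $G_{ip_0}$. Inspecting the coefficients above, $G_{ip_0}$ enters $\mathbf{z}_i-\mathbf{z}_j$ only through the term $\tfrac{1}{\sqrt N}\varphi(\mathbf{x}_{p_0})$ (this remains true when $p_0\in\{i,j\}$: the coefficient of $\varphi(\mathbf{x}_i)$ is $1+\tfrac{1}{\sqrt N}(G_{ii}-G_{ji})$, still affine in $G_{ip_0}$ with nonzero slope $\tfrac{1}{\sqrt N}$). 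Hence, under this conditioning, $\mathbf{z}_i-\mathbf{z}_j=\mathbf{a}+\tfrac{1}{\sqrt N}G_{ip_0}\,\varphi(\mathbf{x}_{p_0})$ for a fixed vector $\mathbf{a}$; choosing a coordinate $\ell$ with $\varphi(\mathbf{x}_{p_0})_\ell\neq 0$ makes $(\mathbf{z}_i-\mathbf{z}_j)_\ell$ a non-degenerate Gaussian, so it vanishes with conditional probability $0$. Integrating out the conditioning gives $\mathbb{P}(\mathbf{z}_i=\mathbf{z}_j)=0$, and the union bound completes the argument. (Equivalently: $\{G:\mathbf{z}_i=\mathbf{z}_j\}$ is an affine subspace of $\mathbb{R}^{N\times N}$ whose linear part is nonzero — it sends $G=e_ie_{p_0}^{\top}$ to $\tfrac{1}{\sqrt N}\varphi(\mathbf{x}_{p_0})\neq 0$ — hence a proper, Lebesgue-null, and therefore Gaussian-null set.)

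\textbf{Main obstacle / loose ends.}
The only genuine obstacle is the degenerate case $\Phi=0$, in which $Z=0$ and all sketches coincide. I would handle this by noting that $\Phi=0$ forces every node feature into the kernel of the feature map — for $\mathcal{E}_{\mathrm{linear}}$ it means all $\mathbf{x}_p\in\ker R$, and for the random Fourier maps $\mathcal{E}_{\mathcal{L}},\mathcal{E}_{\mathrm{RBF}}$ it is a measure-zero event in the sampled frequencies and phases — so it is excluded in any setting of interest; in particular it is \emph{not} the ``featureless limit'' of the footnote, where the common feature vector is nonzero and hence $\Phi\neq 0$. Two minor points I would state carefully: (i) the probability is most cleanly taken over $G$ conditional on $X$ and $\varphi$ (one may also integrate over $X$ as in Proposition~\ref{prop:SRF_ev}, with the same conclusion); and (ii) the extension from $\mathcal{S}_{\mathrm{AG}}^{(1)}$ to $\mathcal{S}_{\mathrm{AG}}^{(k)}$ is immediate, since the first $D$ coordinates of each $\mathbf{z}_i$ already form an almost surely unique family.
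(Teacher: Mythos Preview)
Your argument is correct and, in fact, more rigorous than the paper's own proof. The paper proceeds differently: it treats the ``worst case'' where all node features coincide, say $\mathbf{x}_i=\mathbf{1}$, so that $\mathbf{z}_i=\bigl(1+\tfrac{1}{\sqrt N}\sum_j G_{ij}\bigr)\varphi(\mathbf{1})$; then it observes that the row sums $\sum_j G_{ij}$ are independent Gaussians across $i$ and hence almost surely distinct, and concludes with a one-line remark that differing features only add further distinctness. By contrast, you work directly in the general case, isolate a single entry $G_{ip_0}$ (for any $p_0$ with $\varphi(\mathbf{x}_{p_0})\neq\mathbf{0}$), and use that one coordinate of $\mathbf{z}_i-\mathbf{z}_j$ is a non-degenerate affine function of it.

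What each buys: the paper's argument is shorter but its ``without loss of generality'' is really an informal worst-case heuristic, and the passage to general features is only sketched; it also silently assumes $\varphi(\mathbf{1})\neq\mathbf{0}$. Your conditioning-on-one-entry argument is self-contained, handles arbitrary $\Phi$ uniformly, and explicitly confronts the only genuine failure mode $\Phi=0$. Your careful remarks on the conditioning (over $G$ given $X,\varphi$) and on the trivial extension to $\mathcal{S}_{\mathrm{AG}}^{(k)}$ are also improvements in precision over the paper's treatment.
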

\begin{proof}
Without loss of generality, consider the case where the node features are identical, e.g. $\mathbf{x}_i = \mathbf{1}$. Then $\varphi(\mathbf{x}_i) = \varphi(\mathbf{1})$ for all $i$, and from the definition of SRF with $\mathcal{S}_{AG}^{(1)}$ we have for row $i$ of $Z$:
$$
\mathbf{z}_i = \varphi(\mathbf{1}) + \frac{1}{\sqrt{N}}\sum_{j=1}^N G_{ij} \varphi(\mathbf{1})
$$
Since $\{\sum_{j=1}^N G_{ij}\}$ are independent Gaussian random variables, the coefficients multiplying $\varphi(\mathbf{1})$ are almost surely unique. Thus, $\mathbf{z}_i \neq \mathbf{z}_j$ for $i \neq j$ with probability 1. In the general case where node features differ, $\varphi(\mathbf{x}_i)$ and $\varphi(\mathbf{x}_j)$ may also differ, providing additional distinctness.
\end{proof}

\begin{proposition}[Permutation Equivariance in Expectation]
\label{prop:SRF_permutation_equivariance}
Let $\pi$ be a fixed permutation of node indices, with corresponding permutation matrix $P_\pi$. Let the function $f \triangleq \mathcal{S}_{AG} \circ \mathcal{E}$ be the SRF operation (Equation \ref{eq:SRF_general}). Then $f$ is permutation equivariant in expectation:
$
   \mathbb{E}_{X, G}[f(P_\pi X)] = \mathbb{E}_{X, G}[P_\pi f(X)]
$.
\end{proposition}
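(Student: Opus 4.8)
The plan is to peel the composition $f = \mathcal{S}_{\mathrm{AG}}\circ\mathcal{E}$ apart and handle the two operators separately, exploiting the fact that $\mathcal{E}$ is deterministic given $X$ while only $\mathcal{S}_{\mathrm{AG}}$ carries the randomness in $G$. First I would record that $\mathcal{E}$ is \emph{exactly} (not merely in expectation) permutation equivariant: $\mathcal{E}(P_\pi X) = P_\pi\,\mathcal{E}(X)$ for every realization. This is immediate from the definition, since $\mathcal{E}$ applies the \emph{same} map $\varphi$ to each node's feature vector independently, so relabelling nodes just permutes the rows of $\Phi = \mathcal{E}(X)$ accordingly. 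If $\varphi$ itself carries auxiliary randomness (e.g.\ the random frequencies $\boldsymbol{\omega}_d$ of $\varphi_{\mathrm{RBF}}$), that randomness is shared across all nodes and therefore does not disturb the row-wise commutation.

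Next I would compute the $G$-expectation of the sketch. Because $G$ has i.i.d.\ $\mathcal{N}(0,1)$ entries, $\mathbb{E}_G[G]=0$ and hence $\mathbb{E}_G\!\bigl[\,I + \tfrac{1}{\sqrt N}G\,\bigr] = I$. Fixing $X$ and using the equivariance of $\mathcal{E}$,
\[
\mathbb{E}_G\bigl[f(P_\pi X)\bigr]
= \mathbb{E}_G\Bigl[\bigl(I + \tfrac{1}{\sqrt N}G\bigr)\,P_\pi\,\mathcal{E}(X)\Bigr]
= P_\pi\,\mathcal{E}(X),
\]
since $P_\pi\,\mathcal{E}(X)$ is deterministic and can be pulled out of the expectation; while on the other side
\[
\mathbb{E}_G\bigl[P_\pi\,f(X)\bigr]
= P_\pi\,\mathbb{E}_G\Bigl[\bigl(I + \tfrac{1}{\sqrt N}G\bigr)\Bigr]\,\mathcal{E}(X)
= P_\pi\,\mathcal{E}(X).
\]
So the two conditional expectations coincide for every fixed $X$; taking a further expectation over $X$ (whatever its distribution) and using the tower property yields $\mathbb{E}_{X,G}[f(P_\pi X)] = \mathbb{E}_{X,G}[P_\pi f(X)]$. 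The extension to $\mathcal{S}_{\mathrm{AG}}^{(k)}$ is routine: each of the $k$ blocks $\bigl(I + \tfrac{1}{\sqrt N}G^{(m)}\bigr)\mathcal{E}(X)$ satisfies the identity above, and horizontal concatenation commutes with the row permutation $P_\pi$.

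There is no real technical obstacle; the one thing to get right is conceptual. For a \emph{fixed} draw of $G$ the sketch is \emph{not} equivariant, since $\bigl(I + \tfrac{1}{\sqrt N}G\bigr)P_\pi \neq P_\pi\bigl(I + \tfrac{1}{\sqrt N}G\bigr)$ in general — which is precisely why the statement is only claimed ``in expectation.'' I would add the brief remark that one can in fact upgrade this to equivariance in \emph{distribution}, because $P_\pi G P_\pi^\top \stackrel{d}{=} G$ (permuting rows and columns of an i.i.d.\ Gaussian matrix preserves its law), but this is stronger than needed. Finally I would make sure the permutation bookkeeping is consistent — whether $P_\pi$ acts via $(P_\pi X)_{i,:} = \mathbf{x}_{\pi^{-1}(i)}$ or $\mathbf{x}_{\pi(i)}$ — noting that the equivariance of $\mathcal{E}$ holds verbatim under either convention.
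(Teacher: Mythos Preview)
Your argument is correct and follows essentially the same route as the paper's proof: both reduce to the observation that $\mathcal{E}(P_\pi X)=P_\pi\,\mathcal{E}(X)$ and $\mathbb{E}_G[G]=0$, so each side equals $P_\pi\,\Phi(X)$. Your write-up is simply more explicit about why $\mathcal{E}$ commutes with $P_\pi$ and about the tower/conditioning step, and the distributional-equivariance remark is a nice (but unnecessary) bonus.
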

\begin{proof}
By definition, $f(X) = \bigl(I + \tfrac{1}{\sqrt{N}}G\bigr)\,\Phi(X)$ and $\mathbb{E}[G]=0$. Then
\[
\mathbb{E}[f(P_\pi X)] 
= \mathbb{E}\Bigl[\bigl(I + \tfrac{1}{\sqrt{N}}G\bigr)\,P_\pi \Phi(X)\Bigr]
= P_\pi \Phi(X),
\]
and similarly
\[
\mathbb{E}[P_\pi f(X)]
= \mathbb{E}\Bigl[P_\pi\bigl(I + \tfrac{1}{\sqrt{N}}G\bigr)\,\Phi(X)\Bigr]
= P_\pi \Phi(X).
\]
\end{proof}

\paragraph{SRF Mitigates Oversquashing.}
SRF mitigates oversquashing by encoding topology-agnostic cross-node information in each sketched embedding $\mathbf{z}_i$, as shown in Proposition \ref{prop:nkfs_cross_node_info}. While this encoding into $\mathbb{R}^D$ (where typically $D \ll N$) is not lossless, it enables direct information flow between all node pairs independent of their geodesic distance. This circumvents the exponential information bottleneck inherent in multi-hop message passing \cite{bottleneck}. Our empirical analysis in Section \ref{sec:exp_synth} demonstrates that SRF achieves linear rather than exponential information loss as $N$ increases.

\paragraph{SRF Alleviates Oversmoothing.}
SRF addresses oversmoothing by introducing node-specific, distance-sensitive embeddings that preserve variance across nodes (Propositions \ref{prop:SRF_ev} and \ref{prop:kernel_distance}, respectively). The inclusion of unique $\mathbf{z}_i$ vectors (Proposition \ref{prop:nkfs_as_unique}) repeatedly injected into message passing (Equation \ref{eq:SRF_enhanced_mp}) ensures $\widetilde{\mathbf{h}}_i^{(\ell)} = [\mathbf{h}_i^{(\ell)} | \mathbf{z}_i]$ remain distinct as $\ell$ grows. Our empirical results (Section \ref{sec:exp_synth}) show this approach maintains higher representation diversity even in base features $\mathbf{h}_i^{(\ell)}$, preventing representational collapse with depth.

\begin{figure*}[t]
   \centering
\includegraphics[width=\textwidth]{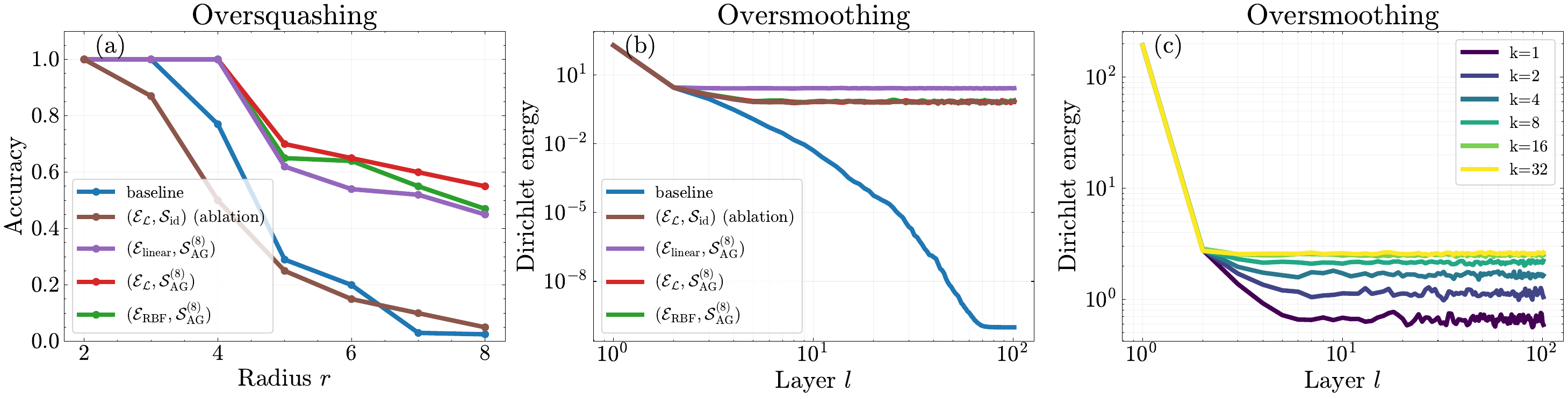}
   \caption{Analysis of oversquashing and oversmoothing across model variants. (a) Accuracy vs. graph radius showing the effect of oversquashing across methods. (b) Dirichlet energy across layers for different methods. (c) Dirichlet energy across for \(\mathcal{E}_{\mathcal{L}}\) with varying $k$. Lower Dirichlet energy indicates more oversmoothing.}
\label{fig:oversquashing_and_oversmoothing}
\end{figure*}

\paragraph{Universality of SRF in MPGNNs.}
SRF provides a principled way to achieve universality in MPGNNs. As discussed in Section \ref{sec:background}, 
purely random features guarantee universality, but often fail to improve metric performance in practice. In contrast, structural encodings provide semantically meaningful and equivariant signals which lead to improved empirical performance, but often fail to provide an encoding that is unique, distance sensitive and equivariant, due to the inherent difficulty of the graph canonization problem. 
SRF strikes a balance between these approaches: 
it is unique (Proposition \ref{prop:nkfs_as_unique}), ensuring universality, while preserving distance in the node feature kernel space whp (Proposition \ref{prop:kernel_distance}). Additionally, it is permutation equivariant in expectation (Proposition \ref{prop:SRF_permutation_equivariance}).

\paragraph{Memory and Runtime Complexity.}
While the JLT is traditionally implemented with dense random projection matrices, memory and runtime efficiency can be improved by using structured random matrices (SRMs) \cite{srm} as detailed in Appendix \ref{appendix:complexity}. SRMs lower storage requirements to $\mathcal{O}(N)$ and enable matrix-vector multiplication in $\mathcal{O}(N \log N)$. Notably, these matrices preserve the theoretical guarantees of Section~\ref{sec:method}. This yields favorable asymptotic complexity compared to many positional encoding methods. Detailed complexity analysis and comparisons are in Appendix \ref{appendix:complexity}. 

\section{Experiments}
\label{sec:experiments}
We empirically evaluate SRF-enhanced GNNs, first validating that SRF addresses key MPGNN limitations Section \ref{sec:exp_synth}, then demonstrating performance on real-world benchmarks (Section \ref{sec:exp_real}). We use GIN \cite{gin} for unattributed-edge graphs and GINE \cite{gine} for edge-featured graphs, with additional architectures tested in Appendix \ref{appendix:backbones} to verify architecture-agnostic benefits.

We evaluate embedding operators \(\mathcal{E}_\mathrm{linear}\), \(\mathcal{E}_{\mathcal{L}}\), and \(\mathcal{E}_\mathrm{RBF}\) with additive Gaussian sketch \(\mathcal{S}_{\mathrm{AG}}^{(k)}\), denoting configurations as \(\bigl(\mathcal{E}, \mathcal{S}_{\mathrm{AG}}^{(k)}\bigr)\). To isolate sketching effects from kernel features alone, we ablate with identity operator \(\mathcal{S}_{\mathrm{id}} = I\), comparing \(\bigl(\mathcal{E}_{\mathcal{L}}, \mathcal{S}_{\mathrm{id}}\bigr)\) against \(\bigl(\mathcal{E}_{\mathcal{L}}, \mathcal{S}_{\mathrm{AG}}^{(k)}\bigr)\). 
Full dataset descriptions, baseline details, hyperparameter search procedures, and
other experimental details
are provided in Appendix \ref{appendix:exp_details}. We include additional experiments in the Appendix: validation of SRF benefits across diverse GNN architectures (Appendix \ref{appendix:backbones}) and analysis of SRF hyperparameters (embedding dimension $D$ and projection count $k$) on performance (Appendix \ref{appendix:hyperparams}).

\subsection{Synthetic Learning Tasks}
\label{sec:exp_synth}
\begin{table}[t]
\caption{Performance on synthetic expressiveness benchmarks. All results averaged over 5 runs. Standard deviations ($\leq 0.002$ across all experiments) omitted for clarity.}
\label{tab:expressiveness_results}
\centering
\small
\begin{tabular}{l|c|c|ccc|ccc}
\toprule
Dataset & \multicolumn{1}{c|}{\textit{Baseline}} & \multicolumn{1}{c|}{\textit{Ablation}} & \multicolumn{3}{c|}{\textit{Ours} $(\mathcal{S}_{\mathrm{AG}}^{(1)})$} & \multicolumn{3}{c}{\textit{Ours} $(\mathcal{S}_{\mathrm{AG}}^{(8)})$} \\
\cmidrule(lr){2-2} \cmidrule(lr){3-3} \cmidrule(lr){4-6} \cmidrule(lr){7-9}
& None & $(\mathcal{E}_{\mathcal{L}}, \mathcal{S}_{\mathrm{id}})$ & $\mathcal{E}_{\mathrm{linear}}$ & $\mathcal{E}_{\mathcal{L}}$ & $\mathcal{E}_{\mathrm{RBF}}$ & $\mathcal{E}_{\mathrm{linear}}$ & $\mathcal{E}_{\mathcal{L}}$ & $\mathcal{E}_{\mathrm{RBF}}$ \\
\midrule
CSL (Acc $\uparrow$) & 0.100 & 0.100 & 1.000 & 1.000 & 1.000 & 1.000 & 1.000 & 1.000 \\
EXP (Acc $\uparrow$) & 0.518 & 0.520 & 1.000 & 1.000 & 1.000 & 1.000 & 1.000 & 1.000 \\
\bottomrule
\end{tabular}
\end{table}
\paragraph{Oversquashing.}
\label{sec:exp_real}
\begin{wrapfigure}{r}{0.5\textwidth}
    \centering
    \vspace{-1cm}
    \includegraphics[width=0.95\linewidth]{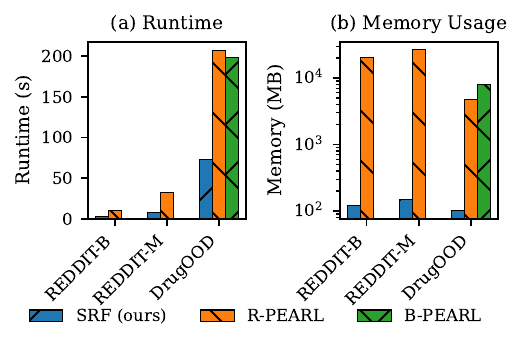}
    \caption{Training efficiency comparison of SRF (ours), R-PEARL, and B-PEARL. (a) Runtime in seconds. (b) Memory usage in MB (log scale).}
    \label{fig:runtime_benchmarks}
\end{wrapfigure}

We evaluate SRF's ability to mitigate oversquashing using the \textit{Tree-NeighborsMatch} synthetic benchmark \cite{bottleneck} discussed in Section \ref{sec:background}. Recall that as radius $r$ increases, exponentially more information must be aggregated by the message passing algorithm, making the task progressively harder for standard MPGNNs. Results in Figure \ref{fig:oversquashing_and_oversmoothing} show that while baseline GIN and ablation \(\bigl(\mathcal{E}_{\mathcal{L}}, \mathcal{S}_{\mathrm{id}}\bigr)\) suffer severe performance degradation beyond $r=4$, SRF-enhanced models maintain higher accuracy at large radii, regardless of choice of $\mathcal{E}$. Notably, all SRF variants achieve perfect accuracy up to $r=4$ and demonstrate more graceful performance decay thereafter, with accuracy remaining above 40\% even at $r=8$. This improved performance at large radii empirically validates our theoretical analysis that SRF enables comparatively more effective long-range communication in MPGNNs.

\paragraph{Oversmoothing.}
To assess SRF against oversmoothing, we follow \citet{oversmoothing_survey}'s protocol using Cora \cite{cora} with randomized features ($X_{jk} \sim \mathcal{N}(0, 1)$) and measuring Dirichlet energy (Equation~\ref{eq:dirichlet_energy}). Figure~\ref{fig:oversquashing_and_oversmoothing}(b) shows baseline GNNs exhibit near-exponential energy decay while SRF variants maintain substantially higher energy. While feature injection alone helps prevent decay (ablation study), Figure~\ref{fig:oversquashing_and_oversmoothing}(c) shows that increasing projections ($k$) further preserves node distinctness in deeper layers.
\paragraph{Expressiveness.}
We evaluate the universal approximation capabilities of SRF-enhanced MPGNNs using two graph isomorphism discrimination benchmarks: CSL \cite{dataset_csl} and EXP \cite{rnf_abboud}, containing graphs indistinguishable by 1-WL and 2-WL tests respectively. Table \ref{tab:expressiveness_results} shows SRF-enhanced models achieving perfect discrimination on both datasets, while baselines and ablations (without sketching) perform no better than random chance.

\begin{table}[t]
\caption{Performance comparison of SRF enhanced GNNs and several positional encoding baselines on Reddit (\% accuracy) and DrugOOD (\% AUC) datasets. 
Results for baselines from \citet{pearl}. Values missing from the literature are denoted by ``---''. OOM indicates out of memory.
\textcolor{first}{First}, \textcolor{second}{second}, and \textcolor{third}{third} best results
are highlighted in \textcolor{first}{blue}, \textcolor{second}{green}, and \textcolor{third}{orange}, respectively.
}
\label{tab:results}
\centering
\small

\begin{tabular}{l|cc|ccc}
\toprule
\multirow{2}{*}{ } & \multicolumn{2}{c|}{Reddit (Acc $\uparrow$)} & \multicolumn{3}{c}{DrugOOD (AUC $\uparrow$)} \\
\cmidrule(lr){2-3} \cmidrule(lr){4-6}
& REDDIT-B & REDDIT-M & Assay  & Scaffold  & Size  \\
\midrule
\textit{Baselines:} \\
GINE (No PE) & 91.8 $\pm$ 1.0 & 56.9 $\pm$ 2.0 & 71.68 $\pm$ 1.10 & 68.00 $\pm$ 0.60 & 66.04 $\pm$ 0.70 \\
GINE + rand id & 91.8 $\pm$ 1.6 & 57.0 $\pm$ 2.1 & --- & --- & --- \\
SignNet & OOM & OOM & 72.27 $\pm$ 0.97 & 66.43 $\pm$ 1.06 & 64.03 $\pm$ 0.70 \\
SignNet-8S & 92.4 $\pm$ 1.1 & 57.8 $\pm$ 0.8 & --- & --- & --- \\
SignNet-8L & 79.5 $\pm$ 12.3 & 41.4 $\pm$ 2.7 & --- & --- & --- \\
BasisNet & --- & --- & 71.66 $\pm$ 0.05 & 66.32 $\pm$ 5.68 & 60.79 $\pm$ 3.19 \\
SPE & --- & --- & \textcolor{second}{72.53 $\pm$ 0.66} & \textcolor{first}{69.64 $\pm$ 0.49} & 66.02 $\pm$ 0.49 \\
SPE-8S & --- & --- & 71.72 $\pm$ 0.71 & 68.72 $\pm$ 0.63 & 65.74 $\pm$ 2.20 \\
R-PEARL & 93.0 $\pm$ 1.3 & 59.4 $\pm$ 1.0 & 72.24 $\pm$ 0.30 & 69.20 $\pm$ 1.00 & 65.89 $\pm$ 1.30 \\
B-PEARL & --- & --- & 71.22 $\pm$ 0.42 & \textcolor{third}{69.51 $\pm$ 0.62} & \textcolor{third}{66.58 $\pm$ 0.67} \\
\midrule
\textit{Ablation:} \\
$(\mathcal{E}_{\mathcal{L}}, \mathcal{S}_{\mathrm{id}})$ & 92.56 $\pm$ 0.58 & 58.32 $\pm$ 0.47 & 71.89 $\pm$ 0.37 & 65.34 $\pm$ 0.26 & 63.29 $\pm$ 0.63 \\
\midrule
\textit{Ours:} $(\mathcal{S}_{\mathrm{AG}}^{(8)})$ \\
$\mathcal{E}_{\mathrm{linear}}$ & \textcolor{second}{94.06 $\pm$ 0.39} & \textcolor{third}{60.18 $\pm$ 0.39} & \textcolor{third}{72.29 $\pm$ 0.30} & 68.79 $\pm$ 0.64 & 66.45 $\pm$ 0.24 \\
$\mathcal{E}_{\mathcal{L}}$ & \textcolor{third}{94.00 $\pm$ 0.35} & \textcolor{second}{60.33 $\pm$ 0.37} & 72.51 $\pm$ 0.69 & 69.43 $\pm$ 0.90 & \textcolor{first}{67.23 $\pm$ 0.54} \\
$\mathcal{E}_{\mathrm{RBF}}$ & \textcolor{first}{94.13 $\pm$ 0.69} & \textcolor{first}{60.53 $\pm$ 0.29} & \textcolor{first}{72.63 $\pm$ 0.41} & \textcolor{second}{69.60 $\pm$ 0.48} & \textcolor{second}{66.67 $\pm$ 0.35} \\
\bottomrule
\end{tabular}
\end{table}

\subsection{Real-world Graph Learning Tasks}
We evaluate SRF-enhanced GNNs on real-world datasets, benchmarking against existing positional encoding approaches (Section~\ref{sec:gnn-limitations}). 
Our results show SRF-enhanced GNNs \emph{outperform many positional encoding approaches} (Table \ref{tab:results}, left) while offering \emph{substantial efficiency gains}: about 3 times faster runtime and about two orders of magnitude less memory than PEARL on evaluated datasets (Figure \ref{fig:runtime_benchmarks}). They show \emph{robustness in out-of-distribution tasks} where other approaches falter, remaining competitive with state-of-the-art approaches (Table \ref{tab:results}, right). Finally, they can be \emph{combined with positional encodings for cumulative improvements} due to their complementary nature (Table \ref{tab:peptides}).

\paragraph{Baselines.} 
We consider several positional encoding baselines discussed in Section \ref{sec:gnn-limitations} and Appendix \ref{appendix:prior_work}: random node features (rand id) \cite{rnf_abboud, rnf_sato}, SignNet and BasisNet \cite{spectral_embd_mpgnn_2}, efficient approximations of SignNet (SignNet-8S, SignNet-8L) \cite{pearl}, SPE \cite{spe} with its efficient variant SPE-8S, and Pearl (R-PEARL, B-PEARL) \cite{pearl}. For some experiments, we include graph transformers and attention-enhanced GNNs: GPS \cite{gps}, SAN \cite{san}, SUN \cite{sun}, GNN-AK \cite{gnnak}, Graph ViT \cite{gvit}.

\paragraph{Social Network Classification.}
We evaluate on REDDIT-B and REDDIT-M datasets \cite{reddit_dataset}, which represent online discussion threads where nodes are users and edges indicate comment interactions. These datasets require models to classify discussion graphs into their respective subreddits (online communities). As shown in Table~\ref{tab:results}, SRF-augmented GNNs consistently outperform baseline methods.
Notably, even our simplest variant ($\mathcal{E}_{\mathrm{linear}}$) demonstrates substantial improvements over prior state-of-the-art methods, with $\mathcal{E}_{\mathrm{RBF}}$ providing the strongest performance. The ablation confirms these gains derive from sketching rather than merely adding kernel features. 

\paragraph{Molecular Graph Out-of-Distribution (OOD) Generalization.}
We evaluate OOD generalization on DrugOOD \cite{drugood}, which tests molecular property prediction across three domain shifts (Assay, Scaffold, Size). As \citet{pearl} demonstrate, these OOD tasks are particularly challenging for positional encoding approaches, with many advanced methods actually degrading performance compared to GNN backbones alone. Conversely, Table~\ref{tab:results} demonstrates that our \textit{feature-based} augmentation approach effectively overcomes this limitation, with all SRF variants consistently outperforming the baseline GINE model across all splits. Notably, our approach achieves state-of-the-art performance on the Assay and Size shifts, and remains competitive on the Scaffold split.

\paragraph{Long-Range Interactions in Peptide Structure Prediction.}
To assess whether SRF complements structural approaches, we evaluate on Peptides-struct \citep{lrgb}, a benchmark with long-range dependencies where graph transformers typically excel. Table~\ref{tab:peptides} shows that combining PEARL positional encodings and SRF augmentation $(\mathcal{E}_{\text{RBF}}, \mathcal{S}_{\mathrm{AG}}^{(8)})$ closes the performance gap between graph transformers and GNNs, verifying SRF provides complementary benefits to positional encodings.

\begin{table}[t]
\caption{Performance comparison for peptide-struct (MAE, lower is better). Baseline results reported by \citet{pearl}. Error bars presented in Appendix \ref{appendix:exp_details} but do not exceed $\pm$  0.004.}
\label{tab:peptides}
\centering
\small
\begin{tabular}{lcccccccc}
\toprule
\multicolumn{7}{c}{Baselines} & \multicolumn{2}{c}{Ours} \\
\cmidrule(lr){1-7} \cmidrule(lr){8-9}
R-PEARL & B-PEARL & GPS & \begin{tabular}[c]{@{}c@{}}SAN+\\RWSE\end{tabular} & GNN-AK+ & SUN & Graph ViT & \begin{tabular}[c]{@{}c@{}}R-PEARL\\+SRF\end{tabular} & \begin{tabular}[c]{@{}c@{}}B-PEARL\\+SRF\end{tabular} \\
\midrule
0.247 & 0.248 & 0.252 & 0.255 & 0.274 & 0.250 & 0.245 & 0.245 & 0.243 \\
\bottomrule
\end{tabular}
\end{table}

\section{Conclusion}
\label{sec:conclusion}

We have presented an unconventional application of the Johnson-Lindenstrauss transform for enhancing message-passing graph neural networks. Our theoretical analysis and empirical results demonstrate that such a strategy overcomes well-known shortcomings of MPGNNs with minimal computational overhead, making it a practical enhancement for existing architectures. A promising direction for future work is to investigate whether analogous sketching techniques generalize to topology-aware embeddings (e.g., graph random kernel features \cite{graph_random_kernel_features}) or structural encodings.

\section*{Acknowledgments}
This research used resources of the Argonne Leadership Computing Facility, a U.S. Department of Energy (DOE) Office of Science user facility at Argonne National Laboratory and is based on research supported by the U.S. DOE Office of Science-Advanced Scientific Computing Research Program, under Contract No. DE-AC02-06CH11357. Additional funding support comes from the National Science Foundation (CCF-2119184 CNS-2313190 CCF-1822949 CNS-1956180). RW gratefully acknowledges the support of NSF DMS-2023109, DOE DE-SC0022232, the NSF-Simons National Institute for Theory and Mathematics in Biology (NITMB) through NSF (DMS-2235451) and Simons Foundation (MP-TMPS-00005320), and the Margot and Tom Pritzker Foundation.

\bibliography{refs}

\appendix
\section{SRF Background and Complexity Analysis}
\label{appendix:prior_work}

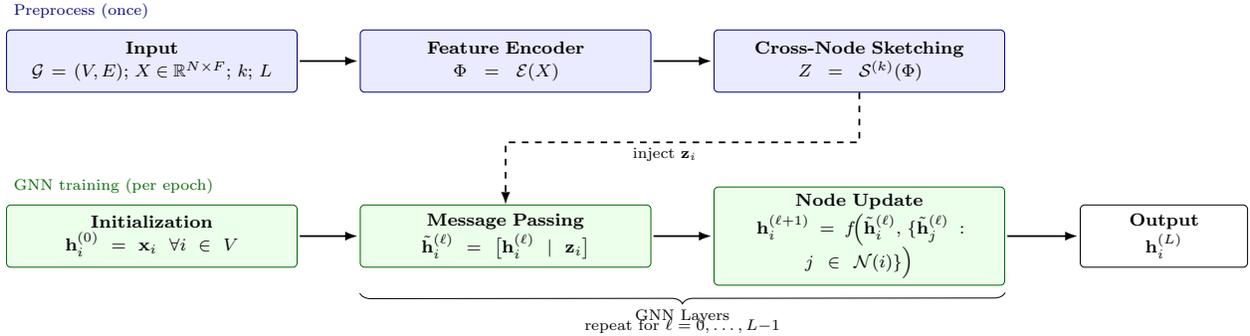
\begin{figure}[h]
\centering
\resizebox{\linewidth}{!}{%
\begin{tikzpicture}[
  node distance=10mm and 10mm,
  every node/.style={font=\footnotesize},
  block/.style={
    draw, rounded corners=2pt, align=center,
    inner sep=3pt, minimum height=10mm, text width=0.27\linewidth
  },
  pre/.style={block, fill=blue!8, draw=blue!50!black},
  train/.style={block, fill=green!8, draw=green!40!black},
  outblock/.style={block, text width=0.15\linewidth},
  line/.style={-{Latex[length=2mm]}, thick, shorten >=1pt, shorten <=1pt},
  dline/.style={-{Latex[length=2mm]}, thick, dashed}
]

\node[pre] (input) {\textbf{Input}\\[1pt]
$\mathcal{G}=(V,E)$;\ $X\!\in\!\mathbb{R}^{N\times F}$;\ $k$;\ $L$};

\node[pre, right=of input] (enc) {\textbf{Feature Encoder}\\[1pt]
$\Phi=\mathcal{E}(X)$};

\node[pre, right=of enc] (sketch) {\textbf{Cross-Node Sketching}\\[1pt]
$Z=\mathcal{S}^{(k)}(\Phi)$};

\draw[line] (input.east) -- (enc.west);
\draw[line] (enc.east) -- (sketch.west);

\node[anchor=west, font=\scriptsize, text=blue!50!black]
  at ([yshift=3mm]input.north west) {Preprocess (once)};

\node[train, below=18mm of input] (init)
{\textbf{Initialization}\\[1pt]
$\mathbf{h}_i^{(0)}=\mathbf{x}_i \ \ \forall i\in V$};

\node[train, right=of init] (concat)
{\textbf{Message Passing}\\[1pt]
$\tilde{\mathbf{h}}_i^{(\ell)}=\big[\mathbf{h}_i^{(\ell)}\mid \mathbf{z}_i\big]$};

\node[train, right=of concat] (update)
{\textbf{Node Update}\\[1pt]
$\mathbf{h}_i^{(\ell+1)}=
f\!\Big(\tilde{\mathbf{h}}_i^{(\ell)},\,\{\tilde{\mathbf{h}}_j^{(\ell)}: j\!\in\!\mathcal{N}(i)\}\Big)$};

\node[outblock, right=12mm of update] (out)
{\textbf{Output}\\[1pt]
$\mathbf{h}_i^{(L)}$};

\draw[line] (init.east) -- (concat.west);
\draw[line] (concat.east) -- (update.west);
\draw[line] (update.east) -- (out.west);

\node[anchor=west, font=\scriptsize, text=green!40!black]
  at ([yshift=3mm]init.north west) {GNN training (per epoch)};

\draw[dline] (sketch.south) -- ++(0,-8mm) -| (concat.north);
\node[font=\scriptsize, align=center]
  at ($(sketch.south)!0.55!(concat.north)$) {inject $\mathbf{z}_i$};

\draw[decorate, decoration={brace, amplitude=5pt, mirror, raise=6pt}]
  ([yshift=-2mm]concat.south west) -- ([yshift=1mm]update.south east)
  node[midway, below=10pt, font=\scriptsize]{GNN Layers};
\node[font=\scriptsize]
  at ($(concat.south east)!0.5!(update.south west)+(0,-8mm)$)
  {repeat for $\ell=0,\dots,L{-}1$};

\end{tikzpicture}}
\caption{Block diagram visualizing the SRF method defined in Algorithm \ref{algo:main}. SRF is computed once (top, blue) and then concatenated to node states at every GNN layer during training (bottom, green).}
\label{fig:blockdiagram}
\end{figure}

This section expands on the limitations of MPGNNs discussed in Section~\ref{sec:gnn-limitations}, reviews prior mitigation strategies, and compares the computational complexity of SRF with recent baselines.
Figure \ref{fig:blockdiagram} overviews the SRF preprocessing strategy defined in Algorithm \ref{algo:main} and described in detail in Section \ref{sec:method}. 

\subsection{Additional Discussion on Prior Work}
\label{appendix:prior_work_main}
\paragraph{Oversquashing.}
Theoretical work formalizes oversquashing by analyzing the partial derivative of the hidden states of nodes $i$ and $j$ at depth $l$, i.e. $\bigl|\partial \mathbf{h}_i^{(l)} / \partial \mathbf{h}_j^0\bigr|$, with \citet{oversquash_theory_1} showing it is bounded by $c \cdot A^d$ for constant $c$, graph adjacency matrix $A$, and geodesic distance $d$, thereby implying that node $\mathbf{h}_i$ and $\mathbf{h}_j$ become exponentially less sensitive to each other as $d$ grows. Furthermore, \citet{oversquash_theory_2} validate the intuition posed in \citet{bottleneck}, proving that increasing hidden dimension $\lvert \mathbf{h} \rvert$ can alleviate oversquashing, whereas greater depth $\lvert L \rvert$ alone fails to resolve it.

Recent work seeks to mitigate oversquashing in MPGNNs by defining a \emph{rewiring} 
$\mathcal{Q}: \mathbb{R}^{n\times n} \to \mathbb{R}^{n\times n}$
that operates on the adjacency matrix $A$ to yield $\mathcal{Q}(A)$, hence producing a rewired graph $\mathcal{G}'$. 
This process reduces either the graph diameter or the Cheeger constant (which quantifies the connectivity of a graph relative to its cut size) \cite{cheeger}\footnote{We credit \citet{oversquash_theory_2} for unifying these prior approaches and offering a comprehensive overview.}. 
Thus, these strategies alleviate oversquashing by altering topology to remove bottlenecks. However, this leads to an altered and usually denser graph $\mathcal{G}'$, undermining the original topological bias of message passing and increasing computational overhead.

\paragraph{Oversmoothing.}
Common ways to mitigate oversmoothing include strategies based on regularization \cite{prior_oversmoothing_1, prior_oversmoothing_2, prior_oversmoothing_3} or skip connections \cite{prior_oversmoothing_residual_1, prior_oversmoothing_residual_2}. Nevertheless, striking a balance between preventing oversmoothing and preserving the expressive power of deeper GNNs remains challenging. Indeed, recent evidence suggests that merely suppressing oversmoothing without enhancing a model's capacity may still limit its ultimate performance \cite{oversmoothing_survey}. We refer interested readers to the recent survey by \citet{oversmoothing_survey} for a more detailed discussion.

\paragraph{Limited Expressiveness.}
A pedagogical example of the shortcomings of existing positional encodings described in Section \ref{sec:background} is the use of Laplacian eigenvectors (e.g. \cite{original_spectral_embd_mpgnn}), which can produce multiple valid solutions depending on the chosen sign or basis, and may fail to distinguish co-spectral but non-isomorphic graphs. This limitation motivates a body of existing encoding approaches such as learned \cite{kwl_gnn_2} and heuristic-based \cite{canonization_for_gnns} eigenvector canonization. However, as described in the main manuscript, designing encodings based on topology that are unique, distance sensitive, and equivariant in polynomial time may be impossible due to its relatedness to graph canonization \cite{canonization_babai}. We refer interested readers to the recent survey by \citet{wl_survey} for a more comprehensive overview of positional encodings for GNNs.

As described in the main manuscript, recent work presented by \citet{pearl} instead aim to \textit{learn} positional encodings using a message passing module that is trained end-to-end with the main ("backbone") GNN. The PEARL methodology begins by anonymizing the input graph by stripping away all node and edge attributes. For each node in the graph, the framework generates M random (R-PEARL) or basis (B-PEARL) attributes. These generated samples are then individually processed through a GNN and its outputs are combined using a pooling function $\rho$ to create (equivariant) positional encodings. In the final stage, the framework processes the graph structure alongside the generated PEs and any existing node or graph attributes, using either a GNN or a Graph Transformer for the final analysis. 
Thus, PEARL's positional encodings enjoy better asympotic complexity than many classical (e.g. spectral-based) positional encoding methods, including a linear, rather than cubic cost, in the number of nodes in the input graph. However, unlike these classical methods, PEARL pays this $\mathcal{O}(n)$ cost for each graph at every epoch of training.

\subsection{Complexity Analysis}
\label{appendix:complexity}

\textbf{SRF Complexity and SRMs.} We analyze the runtime complexity of computing Sketched Random Features (SRF) when using structured random matrices (SRMs) as the sketching operator, as introduced in Section~\ref{sec:method} of the main manuscript.
SRMs, introduced by \citet{srm}, are designed to accelerate Johnson–Lindenstrauss-style projections while preserving statistical properties of dense Gaussian matrices. Specifically, SRMs such as SD-product matrices (e.g., Hadamard–Rademacher constructions) admit fast matrix-vector multiplication in time $\mathcal{O}(N \log N)$, rather than the $\mathcal{O}(N^2)$ cost incurred by unstructured dense matrices. When applied to a full matrix $\Phi \in \mathbb{R}^{N \times D}$ consisting of $D$ feature vectors, the corresponding matrix-matrix multiplication with a SRM costs $\mathcal{O}(N^2 \log N)$.

Our SRF construction proceeds in two stages. Given an input feature matrix $X \in \mathbb{R}^{N \times F}$, we first compute kernel embeddings by applying a random feature map $\phi : \mathbb{R}^F \rightarrow \mathbb{R}^D$ (e.g., random Fourier features~\cite{random_fourier_features}) to each row of $X$, yielding the matrix $\Phi = E(X) \in \mathbb{R}^{N \times D}$. This step has complexity $\mathcal{O}(N F D)$ under the assumption that $\phi$ involves dense linear projections. 

In the second stage, we apply a sketching operator $S^{(k)}$ consisting of $k$ independent SRMs to $\Phi$, producing the final sketched matrix $Z = S^{(k)}(\Phi) \in \mathbb{R}^{N \times kD}$. Since each SRM projection costs $\mathcal{O}(N^2 \log N)$, the total cost across $k$ independent sketches is $\mathcal{O}(k N^2 \log N)$. Combining both stages, the overall runtime complexity of SRF is:
\[
\mathcal{O}(N F D + k N^2 \log N).
\]
This is asymptotically more efficient than using dense Gaussian projections, which incur a cost of $\mathcal{O}(k N^3)$, while preserving similar guarantees on distance preservation and kernel approximation~\cite{srm}.

Finally, in the context of our broader complexity analysis, which considers scaling with respect to the number of nodes $N$ and the number of training epochs $T$, we observe that SRF features are computed once at initialization and reused throughout training. Thus, like spectral encodings, SRF contributes a one-time cost and does not scale with $T$. The dominant term is the ROM projection, yielding an effective runtime complexity of $\mathcal{O}(N^2 \log N)$ and memory complexity of $\mathcal{O}(N)$ with respect to the graph size. 

\textbf{Additional Discussion on Computational Complexity.} We analyze the computational and memory complexity of baseline methods described in Sections \ref{sec:method} and \ref{sec:experiments}. Table \ref{tab:complexity_comparison} presents the complexity of these methods with respect to the number of nodes $N$. The analysis distinguishes between methods that require preprocessing versus those that perform computations during each forward pass. Preprocessing methods like random id \cite{rnf_abboud} and SRF (ours) incur their computational cost once before training, then impose minimal overhead per epoch, while other methods perform their stated complexity computations during each forward pass. In scenarios where graphs are processed over many training epochs, preprocessing approaches achieve computational advantages as the number of epochs approaches or exceeds the graph size. Many graph learning benchmarks commonly found in the literature, including those used to evaluate PEARL and SPE, involve small graphs processed over many training epochs, favoring preprocessing approaches. Runtime benchmarks in Figure \ref{fig:runtime_benchmarks} complement this complexity analysis, demonstrating that SRF achieves superior computational efficiency in such scenarios. This efficiency advantage is also partially attributed to PEARL's requirement for a large number of graph samples that we empirically observe often scale with the number of graphs in the dataset. In scenarios where the required sample count is substantially smaller than the number of nodes, PEARL's empirical runtime would likely be comparatively more favorable. Future work should empirically validate this hypothesis.

\begin{table}[t]
\centering
\caption{Comparison of computational and memory complexity for various GNN positional encoding methods with respect to number of nodes $N$. Methods marked with $^*$ incur costs once before training, while others incur costs per forward pass. When the number of training epochs is comparable to graph size, preprocessing methods achieve computational advantages.}
\begin{tabular}{lcc}
\toprule
\textbf{Method} & \textbf{Computational Complexity} & \textbf{Memory Complexity} \\
\midrule
Random id$^*$ & $\mathcal{O}(N)$ & $\mathcal{O}(N)$ \\
SignNet & $\mathcal{O}(N^3)$ & $\mathcal{O}(N^2)$ \\
SignNet-8S & $\mathcal{O}(N^3)$ & $\mathcal{O}(N)$ \\
SignNet-8L & $\mathcal{O}(N)$ & $\mathcal{O}(N)$ \\
BasisNet & $\mathcal{O}(N^3)$ & $\mathcal{O}(N^2)$ \\
SPE & $\mathcal{O}(N^3)$ & $\mathcal{O}(N^2)$ \\
SPE-8S & $\mathcal{O}(N^3)$ & $\mathcal{O}(N^2)$ \\
R-PEARL & $\mathcal{O}(N)$ & $\mathcal{O}(N)$ \\
B-PEARL & $\mathcal{O}(N^2)$ & $\mathcal{O}(N)$ \\
\midrule
SRF (ours)$^*$ & $\mathcal{O}(N^2 \log N)$ & $\mathcal{O}(N)$ \\
\bottomrule
\end{tabular}
\label{tab:complexity_comparison}
\end{table}

\subsection{Additional Discussion on Baselines}
For completeness, we describe the key methodological components of baseline encoding approaches referenced throughout the main manuscript. A qualitative summary of the differences between different baseline methods and SRF is provided in Table \ref{tab:appendix-srf-vs-baselines}.

\paragraph{SignNet and BasisNet. } \citet{spectral_embd_mpgnn_2} present SignNet and BasisNet, neural architectures that solve the sign and basis ambiguity problems in spectral graph positional encodings. SignNet addresses sign invariance by parameterizing functions of the form $f(v_1, \ldots, v_k) = \rho\left([\phi(v_i) + \phi(-v_i)]_{i=1}^k\right)$, where the structure $\phi(v_i) + \phi(-v_i)$ ensures invariance to sign flips of each eigenvector $v_i$, and $\phi$ and $\rho$ are neural networks (e.g., GIN and MLP respectively). BasisNet handles basis invariance in higher-dimensional eigenspaces by computing $f(V_1, \ldots, V_l) = \rho\left([\text{IGN}_{d_i}(V_i V_i^T)]_{i=1}^l\right)$, where $V_i \in \mathbb{R}^{n \times d_i}$ are orthonormal bases of eigenspaces, the mapping $V \mapsto VV^T$ produces the orthogonal projector which is invariant to basis changes $VQ$ for orthogonal $Q$, and IGNs (Invariant Graph Networks) process the resulting matrices while maintaining permutation equivariance. Both methods can incorporate eigenvalues and node features as additional inputs, and the processed eigenvectors are concatenated with original node features before being fed to downstream prediction models. \citet{pearl} introduce efficient variants: SignNet-8S and BasisNet-8S utilize the 8 smallest eigenvalues while maintaining $O(N^3)$ computational complexity, with SignNet-8S achieving reduced $O(N)$ memory complexity compared to the original $O(N^2)$. SignNet-8L uses the 8 largest eigenvalues, achieving both $O(N)$ computational and memory complexity. 

\paragraph{Hybrid Models and Miscellanies. }
GPS \cite{gps} proposes a modular Graph Transformer framework that decouples local message-passing from global attention, combining three components: positional/structural encodings, local aggregation mechanisms, and global attention in a unified architecture. SAN \cite{san} uses a learned positional encoding based on the Laplacian spectrum, which is added to node features before processing with a fully-connected Transformer. GNN-AK+ \cite{gnnak} extends standard MPNNs by replacing star-pattern aggregation with general subgraph pattern aggregation, where each node representation is computed from an induced subgraph encoding rather than just immediate neighbors. Graph ViT \cite{gvit} adapts the Vision Transformer architecture, achieving linear complexity while capturing long-range dependencies and mitigating over-squashing through global receptive fields. SUN \cite{sun} combines subgraph-based methods with neural networks to enhance expressivity beyond traditional message-passing limitations. 

\begin{table}[t]
\centering
\caption{Qualitative comparison of SRF to common baselines. Properties summarize typical behavior discussed in the main text and Appendix. Entries consolidate theoretical and empirical properties established or referenced throughout the manuscript. 
$^\dagger$Denotes families of methods; specific variants may differ in details, but the stated properties reflect their typical guarantees and practical behavior.}
\label{tab:appendix-srf-vs-baselines}
\setlength{\tabcolsep}{5.2pt}
\renewcommand{\arraystretch}{1.15}
\resizebox{\textwidth}{!}{
\begin{tabular}{lccccc}
\toprule
\textbf{Method} &
\textbf{Unique Representation?} &
\textbf{Distance Sensitive?} &
\textbf{Invariant or Equivariant?} &
\textbf{Mitigates Oversquashing?} &
\textbf{Alleviates Oversmoothing?} \\
\midrule
Random Node Features & Almost surely & No & In expectation & No & No \\
Spectral Encodings & No & Yes & Yes & No & No \\
Subgraph encodings$^\dagger$ & No & No & Yes & No & Unclear \\
Homomorphism counts$^\dagger$ & No & No & Yes & No & Unclear \\
PEARL & Unclear & With high probability & Yes & No & No \\
\textbf{SRF (Ours)} & \textbf{Almost surely} & \textbf{With high probability} & \textbf{In expectation} & \textbf{Yes} & \textbf{Yes} \\
\bottomrule
\end{tabular}}
\vspace{2pt}
\begin{minipage}{0.98\textwidth}\footnotesize
\end{minipage}
\end{table}

\section{GNN Architecture Analysis}
\label{appendix:backbones}

To verify that our approach generalizes across different GNN architectures, we evaluate SRF with three additional backbone architectures: Graph Convolutional Networks (GCN) \cite{gcn}, Graph Attention Networks (GAT)\cite{gat}, and Graph Attention Networks v2 (GATv2)\cite{gatv2}.
We compare against random feature injection baselines that maintain the same total number of learnable parameters. Following \citet{rnf_abboud}, we inject random features, but unlike their approach, we perform reinjection at each layer to provide a fair comparison with our method's parameter count. All experiments use the hyperparameter configurations of the GIN/E models for which results are presented in Section \ref{sec:experiments}. 

Table \ref{tab:backbone_analysis} presents the results on REDDIT-B and REDDIT-M datasets. Across all architectures, most SRF variants consistently outperform random feature baselines, demonstrating that the performance gains are not dependent on the specific choice of GNN backbone. However, we find that random features do slightly outperform a single variant, $(\mathcal{E}_{\text{RBF}}, \mathcal{S}_{\mathrm{AG}}^{(8)})$ on one experiment (REDDIT-B dataset and GAT backbone). Interestingly, relative performance differences between SRF variants across architectures, are slightly different than in the trend found in Section \ref{sec:exp_real}, with $(\mathcal{E}_{\text{linear}}, \mathcal{S}_{\mathrm{AG}}^{(8)})$ achieving slightly stronger comparative performance for some dataset/backbone pairs. Future work should investigate whether these performance variations arise from architecture-specific hyperparameter sensitivities or inherent compatibility differences between SRF variants and GNN backbones.

\section{Experimental Details}
\label{appendix:exp_details}

\subsection{Dataset Descriptions}

Here, we describe properties of the real world graph datasetrs used in our experiments. All datasets are used in accordance with their respective licenses and terms of use.

\paragraph{Reddit-B and Reddit-M.} The Reddit datasets \cite{reddit_dataset} consist of graph classification tasks where each graph represents an online discussion thread. Nodes correspond to users and edges indicate response relationships between users' comments. Reddit-B contains 2,000 graphs across 2 classes with an average of 429.6 nodes per graph, where the task is to classify discussion threads as belonging to question/answer-based communities versus discussion-based communities. Reddit-M contains 5,000 graphs across 5 classes with an average of 508.5 nodes per graph, where the task is to predict which specific subreddit a discussion thread belongs to.

\paragraph{DrugOOD.} The DrugOOD dataset \cite{drugood} evaluates models on out-of-distribution generalization for drug-target binding affinity prediction. The dataset focuses on domain shifts arising from different bioassays (Assay), molecular scaffolds (Scaffold), and molecular sizes (Size), testing the ability to generalize to unseen experimental conditions, molecular structures, and compound sizes respectively. Each sample consists of paired protein-compound data with binary classification labels indicating binding activity.

\paragraph{Peptides-struct.} The Peptides-struct dataset \cite{lrgb} from the Long Range Graph Benchmark comprises over 15,000 molecular graphs with more than 2 million nodes total, where individual graphs range from 8 to 444 nodes. The dataset involves multi-label regression on 3D structural properties of peptides, including inertia, length, sphericity, and geometric fit measures. Graphs are constructed with heavy atoms as nodes and chemical bonds as edges, requiring models to capture long-range interactions without explicit 3D coordinate information.

\begin{table}[t]
\centering
\caption{Performance (Accuracy $\%$) comparison of SRF variants across different GNN architectures on Reddit datasets. Random feature injection is a parameter-matched baseline with reinjection at each layer.}
\begin{tabular}{llcc}
\toprule
\textbf{Architecture} & \textbf{Method} & \textbf{REDDIT-M} & \textbf{REDDIT-B} \\
\midrule
\multirow{4}{*}{GCN} 
& Random Features & 53.46 $\pm$ 1.00 & 91.60 $\pm$ 1.04 \\
& $(\mathcal{E}_{\text{linear}}, \mathcal{S}_{\mathrm{AG}}^{(8)})$ & 58.04 $\pm$ 1.00 & 93.30 $\pm$ 0.72 \\
& $(\mathcal{E}_{\mathcal{L}}, \mathcal{S}_{\mathrm{AG}}^{(8)})$ & 57.20 $\pm$ 0.90 & 93.30 $\pm$ 0.72 \\
& $(\mathcal{E}_{\text{RBF}}, \mathcal{S}_{\mathrm{AG}}^{(8)})$ & 58.08 $\pm$ 0.89 & 93.05 $\pm$ 0.16 \\
\midrule
\multirow{4}{*}{GAT} 
& Random Features & 47.75 $\pm$ 0.84 & 86.55 $\pm$ 1.89 \\
& $(\mathcal{E}_{\text{linear}}, \mathcal{S}_{\mathrm{AG}}^{(8)})$ & 51.40 $\pm$ 1.04 & 89.05 $\pm$ 1.17 \\
& $(\mathcal{E}_{\mathcal{L}}, \mathcal{S}_{\mathrm{AG}}^{(8)})$ & 49.64 $\pm$ 0.96 & 88.15 $\pm$ 1.43 \\
& $(\mathcal{E}_{\text{RBF}}, \mathcal{S}_{\mathrm{AG}}^{(8)})$ & 49.72 $\pm$ 1.02 & 86.15 $\pm$ 1.43 \\
\midrule
\multirow{4}{*}{GATv2} 
& Random Features & 46.88 $\pm$ 0.89 & 84.2 $\pm$ 0.76 \\
& $(\mathcal{E}_{\text{linear}}, \mathcal{S}_{\mathrm{AG}}^{(8)})$ & 52.04 $\pm$ 0.56 & 87.4 $\pm$ 0.74 \\
& $(\mathcal{E}_{\mathcal{L}}, \mathcal{S}_{\mathrm{AG}}^{(8)})$ & 48.76 $\pm$ 0.89 & 87.6 $\pm$ 0.42 \\
& $(\mathcal{E}_{\text{RBF}}, \mathcal{S}_{\mathrm{AG}}^{(8)})$ & 48.40 $\pm$ 1.27 & 86.7 $\pm$ 1.30 \\
\bottomrule
\end{tabular}
\label{tab:backbone_analysis}
\end{table}

\subsection{Training Details and Hyperparameter Search Procedure}

All experiments use SRF with parameters $k=8$ and search over the SRF hyperparameter $D\cdot k \in \{16, 32, 64, 128, 256\}$ (See Section \ref{sec:method}). Hyperparameter optimization is conducted using \href{https://wandb.ai/site/}{Weights and Biases} across all datasets. The Adam optimizer is used throughout all experiments.

\paragraph{Reddit Datasets.} Following standard practice \cite{pearl}, we employ 10-fold cross-validation and report results for the best epoch across 330 training epochs. We report mean and standard deviation across folds. We use CrossEntropy loss with sum pooling. The hyperparameter grid search includes: number of layers $\in \{3, 4, 5, 6, 7, 8\}$, hidden dimensions $\in \{32, 64, 128, 256, 512\}$, batch size $\in \{16, 32, 64, 128\}$, and learning rate $\in [10^{-5}, 10^{-2}]$.

\paragraph{DrugOOD.} We follow the experimental setup from \cite{pearl}, training for 150 epochs and reporting results on the out-of-distribution test set using L1 loss. The hyperparameter search includes: number of layers $\in \{3, 4, 5, 6, 7\}$, batch size $\in \{16, 32, 64, 128\}$, learning rate $\in [10^{-5}, 10^{-2}]$, layer normalization $\in \{\text{true}, \text{false}\}$, batch normalization $\in \{\text{true}, \text{false}\}$, and hidden dimensions $\in \{32, 64, 80, 90, 100, 110\}$.

\paragraph{Peptides-struct.} Models are trained for 500 epochs using L1 loss with residual connections. The hyperparameter search includes: number of layers $\in \{4, 5, 6, 7, 8, 9\}$, hidden dimensions $\in \{70, 95, 105, 135, 150\}$, batch size $\in \{10, 25, 50, 75\}$, learning rate $\in [10^{-5}, 10^{-2}]$, layer normalization $\in \{\text{true}, \text{false}\}$, and batch normalization $\in \{\text{true}, \text{false}\}$. We additionally include the hyperparameters used by PEARL \cite{pearl} in their evaluation, as we combine the methods (Section \ref{sec:experiments}). Results are reported for $\mathcal{E}_\textbf{linear}$ sketch variant. Error bars as reported by \citet{pearl} are $\pm 0.001$ for PEARL-B, PEARL-R, GPS, SUN, and SAN+RWSE; $\pm 0.002$ for Graph ViT; $\pm 0.004$ for SAN+LapPE; and $0.00$ for GNN-AK+. The error bars for our variants are $\pm 0.004$.

\subsection{Hardware and Software Tools}
Our graph processing and learning experiments utilize the open-source \href{https://pytorch-geometric.readthedocs.io}{PyTorch Geometric library} as the primary framework, with \href{https://networkx.org}{NetworkX} serving as a supplementary tool for graph operations. To ensure reproducibility and accurate runtime evaluation, we include a catalog of all software dependencies and their specific versions in the Supplementary material. An anonymized implementation of our codebase is also made available in the Supplementary material. All performance evaluations were conducted using an AMD EPYC 7713 64-Core Processor running Red Hat Enterprise Linux 9.3 and a NVIDIA DGX A100 GPUs (80GB memory). At times during experimentation, a cluster of 8 such GPUs were used to run parallel experiments. 

\subsection{Limitations}
While SRF provides a principled and efficient mechanism for enhancing GNNs with global, feature-based information, it also has several limitations. First, its effectiveness relies on the presence of informative node features; in domains where features are sparse, noisy, or absent, performance gains may be limited. Second, as SRF relies on randomized projections, its properties hold in expectation; in practice, this introduces variance across runs, although our empirical results indicate this effect is negligible. Third, SRF is inherently topology-agnostic and may fail to capture structural signals that positional encodings or spectral methods explicitly model, but as noted, SRF can be used to complement these approaches. Finally, SRF introduces modest computational overhead compared to a vanilla GNN, though it is much more efficient than spectral methods.

\section{Analysis of SRF Embedding Dimension and Projection Count}
\label{appendix:hyperparams}


\begin{figure}[htbp]
    \centering
    \includegraphics[width=0.9\textwidth]{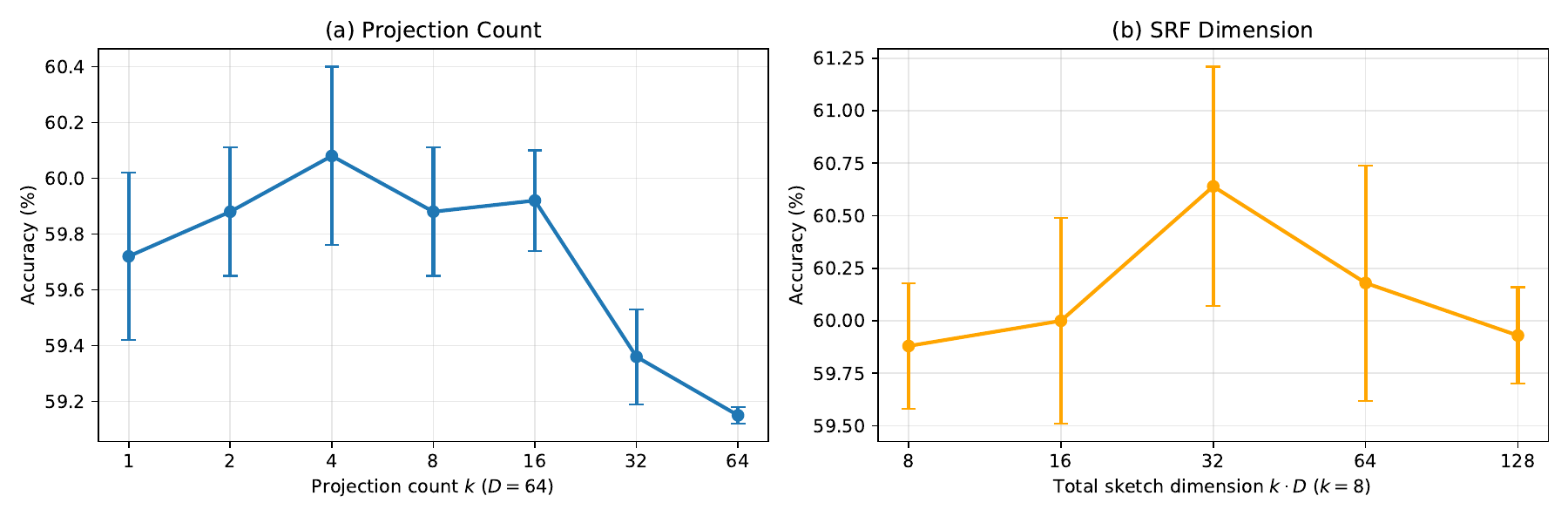}
    \caption{Hyperparameter sensitivity analysis on REDDIT-M using $(\mathcal{E}_{\text{linear}}, S_{AG}^{(k)})$. 
    (a) Performance (accuracy $\%$) vs. projection count $k$ with $D=64$ fixed. 
    (b) Performance (accuracy $\%$) vs. total sketch dimension $k \cdot D$ with $k=8$ fixed.
    }
    \label{fig:hyperparameter_analysis}
\end{figure}


The SRF framework introduces two hyperparameters that control the quality and computational cost of the sketched embeddings: embedding dimension $D$ and projection count $k$ \ref{sec:method}. Understanding the sensitivity of SRF performance to these parameters is helpful for practical deployment.

The embedding dimension $D$ governs the fidelity of the underlying kernel approximation as established in the main manuscript. Larger $D$ provides more accurate kernel estimates but increases computational overhead. The projection count $k$ determines the rank approximation of the random projection, with higher $k$ potentially improving the quality of cross-node information encoding at the cost of increased overhead.

In this Appendix, we evaluate the impact of these hyperparameters with a simple case-study experiment. To isolate the effect of projection count $k$, we fix $D = 64$ and vary $k \in \{1, 2, 4, 8, 16, 32, 64\}$. To assess the impact of embedding dimension $D$ while controlling for total sketch dimensionality, we fix the total sketched feature size $k \cdot D \in \{8, 16, 32, 64, 128\}$ and set $k = 8$, allowing $D$ to vary accordingly.

Our hyperparameter sensitivity results are presented in Figure \ref{fig:hyperparameter_analysis} for the REDDIT-M dataset using operator $(\mathcal{E}_{\text{linear}}, S_{AG}^{(k)})$. Due to computational limitations, we focus the analysis on this single dataset-operator combination.

\textbf{Projection Count Analysis (Figure \ref{fig:hyperparameter_analysis}a).} The results demonstrate a performance trade-off when varying the projection count $k$ and maintaining fixed total sketch dimensionality. Performance peaks at $k=4$ and subsequently declines as $k$ increases. This pattern is likely due to an underlying trade-off between the number of independent projections and the quality of said individual projections: with fixed $D=64$, increasing $k$ provides more diverse cross-node information but at the cost of lower-dimensional kernel feature representations. The decline in performance at high $k$ values suggests that the degradation in kernel approximation quality eventually outweighs the benefits of additional projections.

\textbf{SRF Dimension Analysis (Figure \ref{fig:hyperparameter_analysis}b).} When varying the total sketch dimension $k \cdot D$ with fixed $k=8$, we observe a modest performance increase from $k \cdot D = 8$ to $k \cdot D = 64$, followed by diminishing returns at higher dimensions. However, the substantial variance across runs suggest these differences may not be statistically significant. The performance decline at $k \cdot D = 128$ indicates that excessive sketch dimensionality may lead to overfitting on this moderately-sized dataset. Future work should investigate scaling properties more thoroughly on larger datasets where overfitting concerns are mitigated.

\textbf{Practical Guidance on Hyperparameter Tuning.} Based on our empirical study of SRF hyperparameters and the complexity analysis, we provide the following practical guidance for selecting SRF configurations. For kernel selection, we recommend RBF embeddings $\mathcal{E}_{\mathrm{RBF}}$ when computational constraints prevent extensive kernel sweeps. The projection count $k$ typically exhibits an inverted-U effect on performance: accuracy improves up to a point before declining; across evaluated datasets we find $k = 4$ to be a reasonable default. Finally, increasing the total sketch dimension $k \cdot D$ improves performance with linear memory/runtime cost but shows diminishing returns; we suggest $k \cdot D = 64$ as a practical starting point, with adjustments based on dataset size and available compute.

\end{document}